\def\fs{\scriptsize}
\crefname{section}{Section}{Sections}
\crefname{table}{Table}{Tables}
\crefname{figure}{Fig.}{Figs.}
\crefname{equation}{Eqn.}{Eqns.}
\crefname{algorithm}{Algorithm}{Algorithms}
\crefname{proposition}{Proposition}{Propositions}
\theoremstyle{thmstyleone}%
\newtheorem{theorem}{Theorem}
\newtheorem{proposition}[theorem]{Proposition}%
\theoremstyle{thmstyletwo}%
\theoremstyle{thmstylethree}%
\newcommand{\PreserveBackslash}[1]{\let\temp=\\#1\let\\=\temp}
\newcolumntype{C}[1]{>{\PreserveBackslash\centering}p{#1}}
\newcolumntype{R}[1]{>{\PreserveBackslash\raggedleft}p{#1}}
\newcolumntype{L}[1]{>{\PreserveBackslash\raggedright}p{#1}}
\begin{document}

\title[Descriptor Distillation]{Descriptor Distillation: a
  Teacher-Student-Regularized Framework for
  Learning Local Descriptors}


\author[1,2]{\fnm{Yuzhen} \sur{Liu}}\email{liuyuzhen22@mails.ucas.ac.cn}

\author*[1,2]{\fnm{Qiulei} \sur{Dong}}\email{qldong@nlpr.ia.ac.cn}

\affil[1]{\orgdiv{State Key Laboratory of Multimodal Artificial Intelligence Systems}, \orgname{CASIA}, \orgaddress{\city{Beijing}, \postcode{100190}, \country{China}}}

\affil[2]{\orgdiv{School of Artificial Intelligence}, \orgname{UCAS}, \orgaddress{\city{Beijing}, \postcode{100049}, \country{China}}}


\abstract{Learning a fast and discriminative patch descriptor is a challenging topic in
  computer vision. Recently, many existing works focus on training various descriptor
  learning networks by minimizing a triplet loss (or its variants), which is expected
  to decrease the distance between each positive pair and increase the distance between
  each negative pair. However, such an expectation has to be lowered due to the
  non-perfect convergence of network optimizer to a local solution. Addressing this
  problem and the open computational speed problem, we propose a
  \textit{Des}criptor \textit{Dis}tillation framework for local descriptor learning,
  called DesDis, where a student model gains knowledge from a pre-trained teacher
  model, and it is further enhanced via a designed teacher-student regularizer.
  This teacher-student regularizer is to constrain the difference between the positive
  (also negative) pair similarity from the teacher model and that from the student
  model, and we theoretically prove that a more effective student model could be
  trained by minimizing a weighted combination of the triplet loss and this regularizer,
  than its teacher which is trained by minimizing the
  triplet loss singly. Under the proposed DesDis, many existing descriptor networks
  could be embedded as the teacher model, and accordingly, both equal-weight and
  light-weight student models could be derived, which outperform their teacher in
  either accuracy or speed. Experimental results on 3 public datasets demonstrate
  that the equal-weight student models, derived from the proposed DesDis framework
  by utilizing three typical descriptor learning networks as teacher models, could
  achieve significantly better performances than their teachers and several other
  comparative methods. In addition, the derived light-weight models could achieve
  8 times or even faster speeds than the comparative methods under similar patch
  verification performances.}

\keywords{local descriptor, knowledge distillation, deep learning}



\maketitle

\section{Introduction}\label{sec:introduction}
Local descriptor learning plays an important role in various visual tasks, such as
image retrieval \citep{imret:bow,SOLAR}, panorama stitching
\citep{pano,pano:Brown07automaticpanoramic}, structure-from-motion
\citep{sfm:colmap,sfm:dong} and multi-view stereo
\citep{mvs:colmap,mvs:7780961}. The existing descriptors in literature could be roughly
divided into two categories: hand-crafted descriptors and learning-based descriptors.
Early works
\citep{SIFT,DSP-SIFT,SURF} mainly focused on hand-crafted ones according to researchers'
expertise. Recently, learning-based descriptors
\citep{TNet,HardNet,PCA-SIFT,SOSNet,HyNet,GeoDesc,invardesc,sfm:CrossDesc,R2D2,D2Net,zhao2022alike,invfeat,sun2021loftr},
particularly DNN(Deep Neural Network)-based descriptors, have shown a significant priority
to their hand-crafted counterparts. A comprehensive overview could be found in \citep{2021survey}.

Different from the hand-crafted descriptors, DNN-based descriptors are automatically
learned via deep neural networks, which are trained  by utilizing large-scale local patch
datasets \citep{HPatches,BrownDataset} with ground truth correspondences.
In recent years, a lot of descriptor learning networks have
employed a triplet loss \citep{TNet,HardNet} or its variants \citep{SOSNet,HyNet}, that
are expected to enforce the distances of positive pairs of patch descriptors to be
smaller while the distances of negative pairs to be larger. However, due to the
complexity of these descriptor learning networks, their used optimizers generally
converge to local minima at the training stage \citep{wang2017train,yu2018correcting,faghri2018vse++}
(this is to say, the distances of the positive pairs of descriptors learned by
these networks are not sufficiently smaller, while the distances of the
negative pairs of descriptors learned by these networks are
not sufficiently larger), resulting in lower-than-expected performances.

In order to alleviate the above local convergence problem and additionally speed
up the descriptor inference, we propose a \emph{Des}criptor \emph{Dis}tillation
framework for local descriptor learning, called DesDis, which is inspired by the
model compression ability of the knowledge distillation technique in some other
visual tasks, such as image classification \citep{HintonDistillation,classification},
object detection \citep{obj1,obj2} and face recognition \citep{face1,face2022}.
It has to be pointed out that knowledge distillation is originally a model
compression technique, which aims to transfer knowledge from a pre-trained teacher
model to a smaller student model, but it is not deliberately designed for improving
model accuracy. Hence, under the proposed framework, given a teacher model
(which could be an arbitrary existing descriptor learning network), a teacher-student
regularizer is firstly designed to alleviate the aforementioned local
convergence problem and enhance the ability of the student model by
minimizing the difference between the positive (also negative)
pair similarity under the teacher model and that under the student model. Then,
we prove theoretically in \cref{sec:analysis} that by minimizing a weighted
combination of the
triplet loss and the designed regularizer, the distances of positive
(or negative) descriptor pairs from the student
model could be smaller (or larger) than those from its teacher model.
Consequently, different student models with different compression rates for
learning either more discriminative or faster local descriptors could be
naturally derived under the proposed DesDis framework by utilizing different
existing descriptor learning networks as teacher models.

In sum, our main contributions include:
\begin{itemize}
  \item[$\bullet$]
    We propose the DesDis framework for local descriptor learning through knowledge
    distillation, where many existing descriptor learning networks could be seamlessly
    embedded as the teacher models.
  \item[$\bullet$]
    We explore the teacher-student regularizer under the proposed DesDis framework,
    which is helpful to further decrease (or increase) the distances of positive
    (or negative) pairs of descriptors output by the student model in comparison
    to the teacher model at the training stage. In addition, we give a theoretical
    proof of the effectiveness of this regularizer.
  \item[$\bullet$]
    Given an arbitrary teacher model under the proposed DesDis framework, not only
    a more discriminative equal-weight student model, but also a set of light-weight
    student models that achieve a trade-off between computational accuracy and speed,
    could be derived, whose effectiveness has been demonstrated by the experimental results
    in \cref{sec:experiments}.
\end{itemize}

The rest of the paper is organized as follows. \cref{sec:related} gives a review
on DNN-based local descriptors and knowledge distillation in other visual tasks.
\cref{sec:method} introduces the framework in detail. Experimental results are
reported in \cref{sec:experiments}. \cref{sec:conclusion} concludes the paper.

\section{Related Work}\label{sec:related}
Here, we firstly review some DNN-based methods for descriptor learning in literature.
Then, considering that the proposed framework employs the knowledge distillation
technique, we also give a review on knowledge-distillation-based methods for handling
other visual tasks.

\subsection{DNN-based Local Descriptors}
As discussed in \cref{sec:introduction}, in recent years, DNN-based methods
for descriptor learning have shown a significant priority to the early hand-crafted methods
\citep{SIFT,SURF,PCA-SIFT,DSP-SIFT} in literature.
MatchNet \citep{MatchNet} employed a siamese architecture, where one branch was used
for mapping a patch to a feature representation and the other was used for measuring the
similarity of features.
\citet{TNet} used a triplet margin loss with anchor swap to construct triplets.
\citet{L2Net} proposed the L2Net for learning local descriptors,
which utilized a fully convolutional architecture.
\citet{HardNet} adopted the `hardest in the batch' strategy to sample negative pairs.
\citet{DOAP} directly optimized a ranking-based retrieval performance
metric for learning local descriptors.
\citet{GeoDesc} integrated the geometric constraints from multi-view
reconstructions, which could benefit the learning process in terms of data
generalization, data sampling and loss computation.
\citet{SOSNet} jointly used the traditional first order similarity loss term
and a designed second order similarity regularizer to train a descriptor network.
\citet{HyNet} showed theoretically and empirically that a
hybrid similarity measure could balance the gradients from negative and positive
samples. In addition, unlike \citep{TNet,DOAP,HardNet,SOSNet,HyNet} where a large
number of local image patches are used as inputs for network training, some works
\citep{D2Net,R2D2,aslfeat,disk} also investigated to use whole images as input for
training and then output the descriptors of all the key points together in each
input image.

It is worth noting that the aforementioned works
\citep{TNet,L2Net,DOAP,HardNet,GeoDesc,SOSNet,HyNet} aimed to train descriptor
learning networks with a set of local patches by simultaneously minimizing
the distances of positive descriptor pairs and maximizing the distances of negative
descriptor pairs with a triplet loss or its variants, but these works were
prone to obtaining local solutions in practice due to the networks' complexity.
This issue motivates us to design a more effective regularizer
(i.e. the teacher-student regularizer which would be described in detail in
\cref{sec:regularizer}), which could ensure that the distances of positive
descriptor pairs would be further decreased while those of negative descriptor pairs
would be further increased in both theory and practice.

\begin{figure*}[t!]
  \centering
  \includegraphics[width=0.7\textwidth]{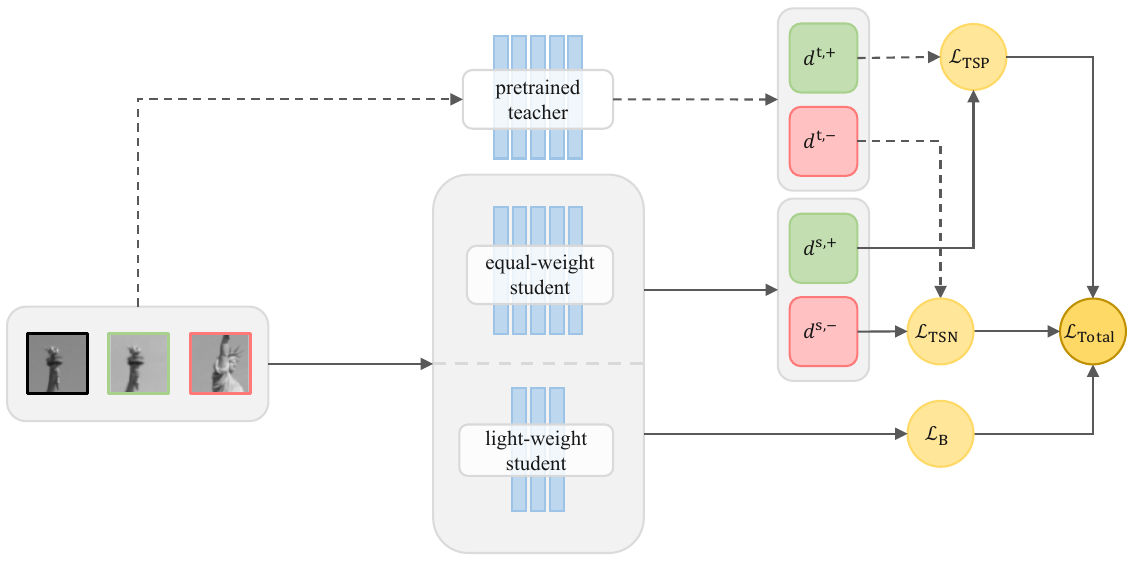}
  \caption{The pipeline of the proposed DesDis framework. $d^{\mathrm{t},+}$
    and $d^{\mathrm{t},-}$ are the distances of positive and negative descriptor
    pairs from the teacher model. $d^{\mathrm{s},+}$ and $d^{\mathrm{s},-}$ are
    the distances of positive and negative descriptor pairs from the student model.
    $\mathcal{L}_\mathrm{B}$ is a triplet loss variant. $\mathcal{L}_\mathrm{TSP}$ and
    $\mathcal{L}_\mathrm{TSN}$ are the two forms of the teacher-student regularizer.}
  \label{fig:pipeline}
  \vspace{-0.1in}
\end{figure*}

\subsection{Knowledge Distillation in Other Visual Tasks}
Knowledge distillation is a model compression technique, where a student network is
trained to mimic a pre-trained teacher model. The student model could benefit from
extra supervisory signal by the soft information from the teacher network. It was
firstly proposed for image classification by
\citet{HintonDistillation}. Following this seminal work, the
knowledge distillation technique has been extended to handle different visual tasks,
such as object detection \citep{obj1,obj2}, face recognition \citep{face1,face2022},
image segmentation \citep{SegmentationDistillation,SegmentationInter},
pose estimation \citep{pose1,pose2}, \textit{etc}.
For example, \citet{obj1} explored a knowledge distillation
method to transfer the knowledge of unseen categories for object detection.
\citet{face2022} proposed an evaluation-oriented
knowledge distillation method for deep face recognition, which could reduce
the performance gap between the teacher and student models during training.
\citet{pose2} proposed a fast pose distillation model training
method enabling to more effectively train small human pose CNN networks.

Here, the following two points have to be explained:
\begin{itemize}
  \item[(i)]
    It is noted that \citet{featdist:9423453} introduced the concept of
    `feature distillation' for descriptor learning, which is to directly learn a
    descriptor from the features extracted from the intermediate layers of a pretrained
    convolutional network, and it is significantly different from the concept of
    knowledge distillation used in our work and the aforementioned
    works \citep{HintonDistillation,obj1,FaceDistillation:Neuron,pose1},
    which is to pursue a student model from a given teacher model.
  \item[(ii)] As discussed above, knowledge distillation is originally a model
    compression technique, but it is not deliberately designed for improving model
    accuracy. Hence, although the original knowledge distillation technique
    \citep{HintonDistillation} is seemingly able to be used for handling the descriptor
    learning task, it is intrinsically unable to guarantee a student model with a
    higher accuracy, even with a comparable accuracy to its teacher model. This is
    the issue that also motivates us to explore the teacher-student regularizer in the
    following section.
\end{itemize}

\section{Methodology}\label{sec:method}

In this section, we  propose the DesDis framework with the designed teacher-student
regularizer for local descriptor learning. Firstly, we describe the pipeline of the
DesDis framework. Then, we present the teacher-student regularizer as well as the
total loss function. Finally, we give a theoretical analysis on the designed
teacher-student regularizer.

\subsection{The DesDis Framework}\label{sec:framework}
The proposed DesDis framework utilizes a knowledge distillation strategy, whose
pipeline is shown in \cref{fig:pipeline}. As seen from this figure,
the DesDis framework consists of a pre-trained teacher model, a student model, and
a loss function for training the student model. In principle, many existing
networks \citep{TNet,L2Net,HardNet,DOAP,HyNet} for descriptor learning
could be straightforwardly used as teacher models, and the goal
of DesDis is to pursue such a student model that could not only gain
knowledge from the pre-trained teacher model, but also achieve better
performances by introducing a teacher-student regularizer.

In this work, we use or design the following two kinds of student models:
\begin{itemize}
  \item [(i)] \textbf{Equal-weight student model:}
        For a pre-trained teacher model, we straightforwardly adopt its architecture as the
        corresponding equal-weight student model.

  \item [(ii)] \textbf{Light-weight student model:}
        Here, a light-weight student model means a smaller model with fewer parameters than a given
        teacher model. Considering that the number of the convolutional layers in many
        state-of-the-art descriptor networks \citep{L2Net,HardNet,SOSNet,HyNet} are no less than 7,
        we design a set of 5-convolutional-layer networks with different numbers of
        channels as light-weight student models, whose architecture is shown in
        \cref{fig:KDNet}. These models are denoted as DesDis-$D$, where $D$
        represents the channel number in the first convolutional layer, and is set to
        \{8,16,24,32\} respectively in our work.
\end{itemize}

\begin{figure}
  \begin{center}
    \includegraphics[width=0.17\textwidth]{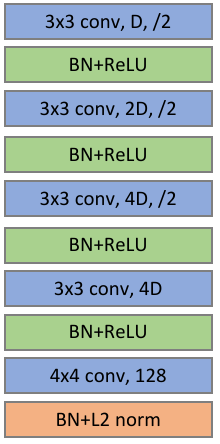}
  \end{center}
  \caption{Architectures of the design light-weight model DesDis-$D$
    which consists of 5 convolutional layers.
    `$\backslash$2' denotes strided convolution with a stride of 2.}
  \label{fig:KDNet}
  \vspace{-0.1in}
\end{figure}

In the following parts, we will describe the designed teacher-student regularizer as
well as the total loss function $\mathcal{L}_\textrm{Total}$ in detail, and provide the
theoretical analysis on the designed teacher-student regularizer.

\subsection{Teacher-Student Regularizer}\label{sec:regularizer}
As discussed in \cref{sec:introduction}, many existing descriptor learning
methods \citep{TNet,L2Net,HardNet,SOSNet,HyNet} are trained by minimizing the
triplet loss or its variants, and the distances of the positive (or negative) pairs
of their learned descriptors are not sufficiently smaller (or larger). This issue
motivates us to design the following teacher-student (TS) regularizer under the
proposed descriptor distillation framework consisting of a pretrained teacher model
, which could be one of the aforementioned works \citep{TNet,L2Net,HardNet,SOSNet,HyNet},
and a student model, so that the distances of positive descriptor pairs are further
decreased and the distances of negative descriptor pairs are increased in the
student model, comparing with the pre-trained teacher model.
The proposed TS regularizer neither maximizes the similarity of
each positive pair nor minimizes the similarity of each negative pair, but it
aims to minimize the difference between the positive (also negative) pair
similarity from the teacher model and that from the student model.

Specifically, given $N$ anchor sample patches, $N$ matching patches and $N$
non-matching patches, their corresponding descriptors learned by the teacher model
are denoted as $\{x_i^\mathrm{t}\}_{i=1}^N$, $\{x_i^{\mathrm{t}, +}\}_{i=1}^N$
and $\{x_i^{\mathrm{t}, -}\}_{i=1}^N$
respectively, and similarly, their corresponding descriptors learned by the student model are
denoted as $\{x_i^\mathrm{s}\}_{i=1}^N$, $\{x_i^{\mathrm{s}, +}\}_{i=1}^N$ and $\{x_i^{\mathrm{s}, -}\}_{i=1}^N$
respectively. The TS regularizer has dual forms for handling positive
and negative pairs respectively. The form for positive pairs is formulated as:
\begin{equation}\label{eq:LTSP}
  \mathcal{L}_\mathrm{TSP}=\frac{1}{N}\sum_{i=1}^N(d^{\mathrm{t},+}_i-d^{\mathrm{s},+}_i)^2\\
\end{equation}
\noindent where $d^{\mathrm{t},+}_i=\|x^\mathrm{t}_i-x^{\mathrm{t},+}_i\|_2$,
$d^{\mathrm{s},+}_i=\|x^\mathrm{s}_i-x^{\mathrm{s},+}_i\|_2$ are the Euclidean distances of the $i$-th
positive pair of descriptors learned by the teacher
and student models respectively (`$\|\cdot\|_2$' denotes the $L_2$ norm).

Similarly, the form of the TS regularizer for negative pairs is formulated as:
\begin{equation}\label{eq:LTSN}
  \mathcal{L}_\textrm{TSN}=\frac{1}{N}\sum_{i=1}^N(d^{\mathrm{t},-}_i-d^{\mathrm{s},-}_i)^2\\
\end{equation}
\noindent where $d^{\mathrm{t},-}_i=\|x^\mathrm{t}_i-x^{\mathrm{t},-}_i\|_2$,
$d^{\mathrm{s},-}_i=\|x^\mathrm{s}_i-x^{\mathrm{s},-}_i\|_2$ are the Euclidean distances of the $i$-th negative pair
of descriptors learned by the teacher and student models respectively.

\subsection{Total Loss Function}

It is noted from \cref{eq:LTSP} and \cref{eq:LTSN} that the designed TS
regularizer requires neither each positive pair to be closer, nor each negative pair
to be more distant. Hence, this regularizer is not used singly for training the
student model under the proposed DesDis framework. Instead, it is used jointly
with the triplet-like loss term $\mathcal{L}_\textrm{B}$ of the pre-trained teacher model.

In this work, the total loss function $\mathcal{L}_\textrm{Total}$ for training the student model
is a weighted combination of the loss term $\mathcal{L}_\textrm{B}$ of the teacher model and the two
aforementioned dual forms of the TS regularizer as:
\begin{equation}\label{eq:Lobj}
  \mathcal{L}_\textrm{Total}=\mathcal{L}_\textrm{B}+\alpha_\textrm{p} \mathcal{L}_\textrm{TSP}+\alpha_\textrm{n}\mathcal{L}_\textrm{TSN}
\end{equation}
\noindent where $\alpha_\textrm{p}$ and $\alpha_\textrm{n}$ are two preset weight parameters.
$\mathcal{L}_\textrm{B}$ could be the classic triplet loss or its variants in the
existing works \citep{HardNet,SOSNet,HyNet}.


\subsection{Theoretical Analysis}\label{sec:analysis}
In this subsection, we give a theoretical proof that once the regularizer is used, the
distances of positive (or negative) pairs of descriptors in  the student model are
smaller (or larger) than those in the teacher model at the training stage.

Here, we analyze the fundamental case where the classic triplet loss
$\mathcal{L}_\mathrm{T}$ in \cref{eq:Lt} is used as the loss term
$\mathcal{L}_\textrm{B}$ in the total loss
function (\ref{eq:Lobj}).

The  classic triplet loss $\mathcal{L}_\mathrm{T}$ is formulated as:
\begin{equation}\label{eq:Lt}
  \mathcal{L}_\mathrm{T}=\frac{1}{N}\sum_{i=1}^N{\rm max}(0,m+d_i^+-d_i^-)
\end{equation}
\noindent where $m$ is a hyperparameter.

For a pre-trained teacher model which has been trained by minimizing the triplet
loss $\mathcal{L}_\mathrm{T}$, the distances $\{d_i^{\mathrm{t},+}\}_{i=1}^N$ of
$N$ positive descriptor pairs and the distances $\{d_i^{\mathrm{t},-}\}_{i=1}^N$
of $N$ negative descriptor pairs in the
teacher model could be straightforwardly obtained. Then, we give the following
proposition on the student model which is trained by minimizing the total loss
function (\ref{eq:Lobj}) with $\mathcal{L}_\mathrm{T}$:

\begin{figure*}[h]
  \captionsetup[subfigure]{justification=centering}
  \def\sca{0.234}
  \centering
  \subfloat[Results from DesDis-HardNet on Brown]{
    \includegraphics[width=\sca\textwidth]{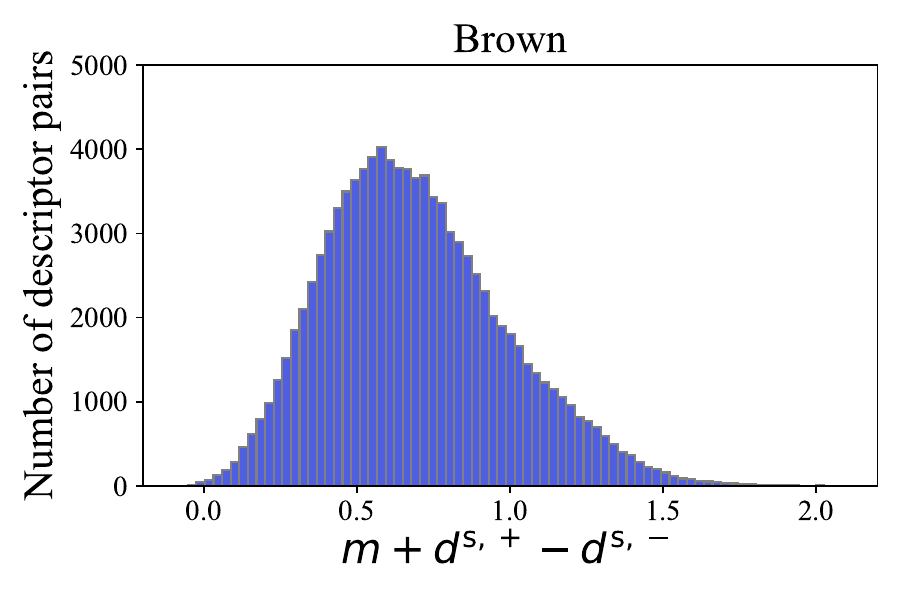}
    \label{fig:desdis_hardnet_brown}
  }
  \subfloat[Results from DesDis-HardNet on HPatches]{
    \includegraphics[width=\sca\textwidth]{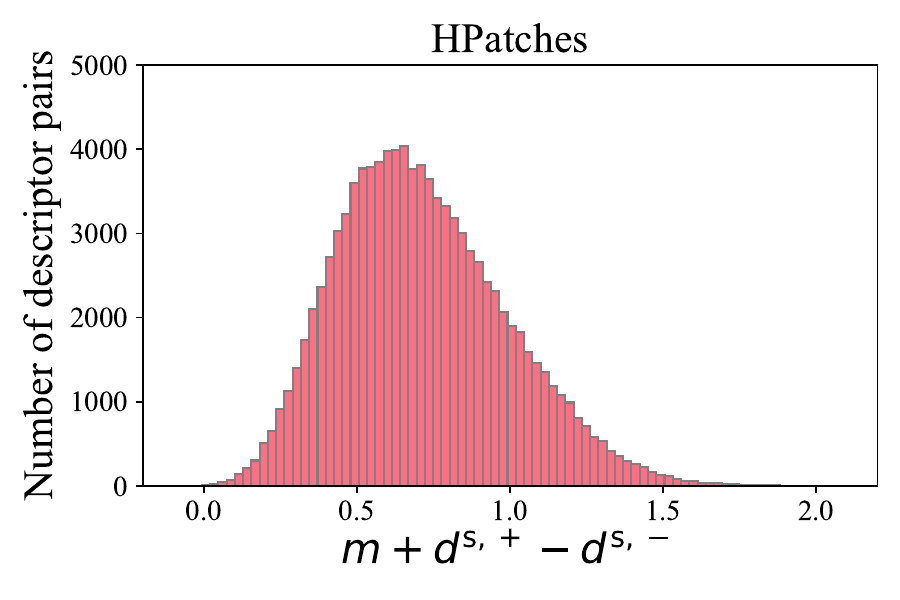}
    \label{fig:desdis_hardnet_hpatches}
  }
  \subfloat[Results from DesDis-32 on Brown]{
    \includegraphics[width=\sca\textwidth]{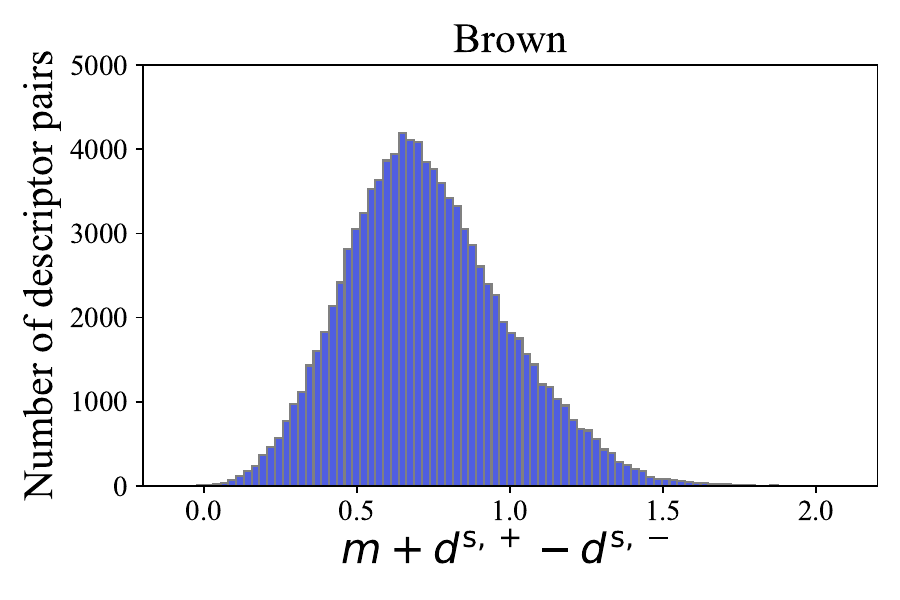}
    \label{fig:desdis_sift_brown}
  }
  \subfloat[Results from DesDis-32 on HPatches]{
    \includegraphics[width=\sca\textwidth]{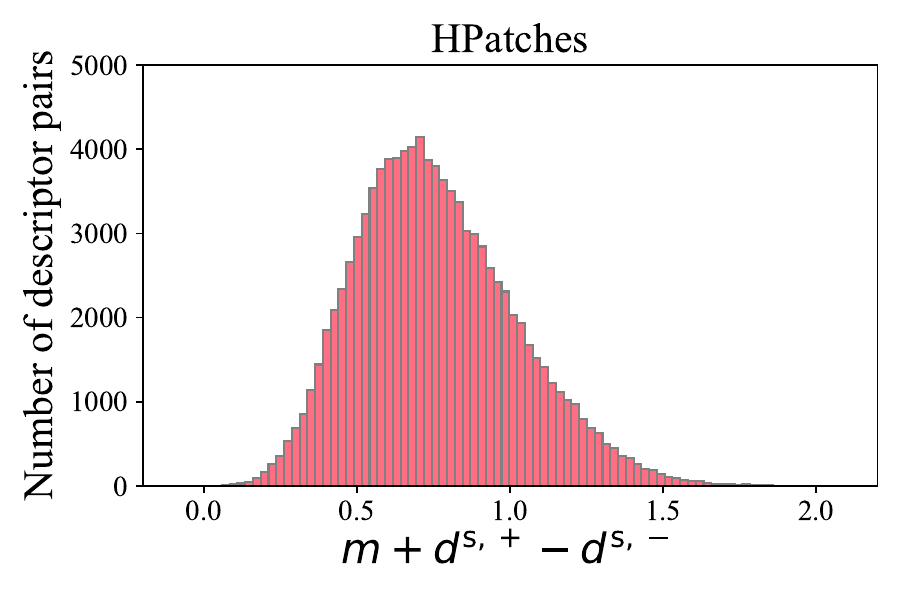}
    \label{fig:desdis_sift_hpatches}
  }
  \caption{
  The distribution of
  (a) `$m+d^{\mathrm{s},+}-d^{\mathrm{s},-}$' from DesDis-HardNet on Brown,
  (b) `$m+d^{\mathrm{s},+}-d^{\mathrm{s},-}$' from DesDis-HardNet on HPatches,
  (c) `$m+d^{\mathrm{s},+}-d^{\mathrm{s},-}$' from DesDis-32 (using SIFT as teacher)
  on Brown, and
  (d) `$m+d^{\mathrm{s},+}-d^{\mathrm{s},-}$' from DesDis-32 (using SIFT as teacher) on HPatches.
  `$d^{\mathrm{s},+}$' and `$d^{\mathrm{s},-}$' denote the distance of positive
  and negative pairs respectively from the student.
  }
\end{figure*}

\begin{proposition}\label{prop}
  Given  the distances $\{d_i^{\mathrm{t},+}\}_{i=1}^N$ of N positive descriptor pairs and
  the distances $\{d_i^{\mathrm{t},-}\}_{i=1}^N$ of N negative descriptor pairs in the
  teacher model, under the condition that $m+d_i^{\mathrm{s},+}-d_i^{\mathrm{s},-}>0,i=1,2,\cdots,N$,
  the optimal solution to the total loss function (\ref{eq:Lobj}) with $\mathcal{L}_\mathrm{T}$ satisfies:
  $$d_i^{\mathrm{s},+} < d_i^{\mathrm{t},+}, \quad d_i^{\mathrm{s},-} > d_i^{\mathrm{t},-}, i = 1,2,...,N$$
\end{proposition}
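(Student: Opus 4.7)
The plan is to exploit the active-margin hypothesis $m+d_i^{s,+}-d_i^{s,-}>0$ to strip away the $\max$ in $\mathcal{L}_T$, observe that the resulting total loss is a separable, strictly convex quadratic in the $2N$ student distances, and then read off the claimed inequalities from the first-order conditions.

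First, under the hypothesis the $i$-th triplet term reduces to $m+d_i^{s,+}-d_i^{s,-}$, so the total objective becomes
\begin{equation*}
\frac{1}{N}\sum_{i=1}^N\!\Bigl[m+d_i^{s,+}-d_i^{s,-}+\alpha_p(d_i^{t,+}-d_i^{s,+})^2+\alpha_n(d_i^{t,-}-d_i^{s,-})^2\Bigr],
\end{equation*}
which decouples across $i$ and further decouples the positive and negative student distances. Treating the teacher distances $d_i^{t,+}, d_i^{t,-}$ as fixed constants, I would then minimise each of the $2N$ one-dimensional convex quadratics independently. Setting the derivative with respect to $d_i^{s,+}$ to zero yields the closed-form optimum $d_i^{s,+}=d_i^{t,+}-1/(2\alpha_p)$, and the analogous computation for $d_i^{s,-}$ yields $d_i^{s,-}=d_i^{t,-}+1/(2\alpha_n)$; strict convexity guarantees these are the unique global minima. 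Since $\alpha_p,\alpha_n>0$, the strict inequalities $d_i^{s,+}<d_i^{t,+}$ and $d_i^{s,-}>d_i^{t,-}$ follow directly, and the magnitudes of the shifts $1/(2\alpha_p)$ and $1/(2\alpha_n)$ even quantify how much the regulariser tightens positives and spreads negatives.

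The main subtlety is not the minimisation, which is routine, but justifying why the active-margin hypothesis is the right thing to assume. Figure~\ref{fig:assumption} provides the empirical justification: nearly every training triplet satisfies $m+d^+-d^->0$, so the $\max$ is almost always active at the student optimum. I would also flag a modelling caveat: the proposition implicitly treats the student distances as free scalar variables rather than outputs of a parameterised network, so the result is best read as an idealised statement about the loss landscape — it says that, insofar as the student is expressive enough to move its distances independently, minimising $\mathcal{L}_{Total}$ strictly improves both positive compactness and negative separation relative to the teacher baseline.
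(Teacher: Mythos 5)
Your proposal is correct and follows essentially the same route as the paper: use the active-margin hypothesis to remove the $\max$, note convexity of the resulting quadratic in the student distances, and read off $d_i^{s,+}=d_i^{t,+}-1/(2\alpha_p)$ and $d_i^{s,-}=d_i^{t,-}+1/(2\alpha_n)$ from the first-order conditions. Your added remarks on separability, uniqueness from strict convexity, and the caveat that the student distances are treated as free variables rather than network outputs are accurate refinements of the paper's argument, not a different method.
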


\begin{proof}
  By replacing $\mathcal{L}_\textrm{B}$ in \cref{eq:Lobj} with the triplet loss $\mathcal{L}_\mathrm{T}$ in
  \ref{eq:Lt}, under the condition that $m+d_i^{\mathrm{s},+}-d_i^{\mathrm{s},-}>0,i=1,2,\cdots,N$,
  the total loss function could be re-formulated as:
  \begin{equation}
    \begin{aligned}
      \mathcal{L}_\textrm{Total}
       & = \frac{1}{N}\sum_{i=1}^N[{\rm max}(0,m+d_i^{\mathrm{s},+}-d_i^{\mathrm{s},-}) +                                          \\
       & \alpha_\textrm{p}(d^{\mathrm{t},+}_i-d^{\mathrm{s},+}_i)^2 + \alpha_\textrm{n} (d^{\mathrm{t},-}_i-d^{\mathrm{s},-}_i)^2] \\
       & = \frac{1}{N}\sum_{i=1}^N[(m+d_i^{\mathrm{s},+}-d_i^{\mathrm{s},-}) +                                                     \\
       & \alpha_\textrm{p}(d^{\mathrm{t},+}_i-d^{\mathrm{s},+}_i)^2 + \alpha_\textrm{n} (d^{\mathrm{t},-}_i-d^{\mathrm{s},-}_i)^2]
    \end{aligned}\label{eq:Ltotal}
  \end{equation}

  $\mathcal{L}_\textrm{Total}$ is a convex function of $d_i^{\mathrm{s},+}$ and $d_i^{\mathrm{s},-},i=1,2,\cdots,N$,
  hence its optimal solution is achieved if and only if its first-order derivatives
  with respect to both $d_i^{\mathrm{s},+}$ and $d_i^{\mathrm{s},-}$  are equal to zero.

  The first-order partial derivatives of $\mathcal{L}_\textrm{Total}$ with respect
  to $d_i^{\mathrm{s},+}$ and $d_i^{\mathrm{s},-}$ are:
  \begin{equation}\label{eq:derivatives}
    \begin{aligned}
      \frac{\partial\mathcal{L}_\textrm{Total}}{\partial d_i^{\mathrm{s},+}} & =1+2\alpha_\textrm{p}d_i^{\mathrm{s},+}-2\alpha_\textrm{p}d_i^{\mathrm{t},+}  \\
      \frac{\partial\mathcal{L}_\textrm{Total}}{\partial d_i^{\mathrm{s},-}} & =-1+2\alpha_\textrm{n}d_i^{\mathrm{s},-}-2\alpha_\textrm{n}d_i^{\mathrm{t},-}
    \end{aligned}
  \end{equation}

  Accordingly, by letting both
  $\frac{\partial\mathcal{L}_\textrm{Total}}{\partial d_i^{\mathrm{s},+}}$
  and $\frac{\partial\mathcal{L}_\textrm{Total}}{\partial d_i^{\mathrm{s},-}}$
  in \cref{eq:derivatives} to be 0, we obtain:
  \begin{equation}
    \begin{aligned}
      d_i^{\mathrm{s},+} & =d_i^{\mathrm{t},+}-\frac{1}{2\alpha_\textrm{p}} \\
      d_i^{\mathrm{s},-} & =d_i^{\mathrm{t},-}+\frac{1}{2\alpha_\textrm{n}}
    \end{aligned}
    \label{eqn:opt}
  \end{equation}

  Since $\frac{1}{2\alpha_\textrm{p}}$ and $\frac{1}{2\alpha_\textrm{n}}$ are two positive numbers,
  it holds:
  \begin{equation}\nonumber
    d_i^{\mathrm{s},+} <  d_i^{\mathrm{t},+}, \quad d_i^{\mathrm{s},-} > d_i^{\mathrm{t},-}, i = 1,2,...,N
  \end{equation}
\end{proof}

\cref{prop} reveals that given a pre-trained teacher model, the distances
of positive descriptor pairs in the student model which is trained by jointly
minimizing the triplet loss and the designed regularizer are smaller than those
in the teacher model at the training stage in theory, while the distances of
negative descriptor pairs in the student model are larger than those in the
teacher model.

It is also noted that \cref{prop} holds true under the condition that
`$m+d_i^{\mathrm{s},+}-d_i^{\mathrm{s},-}>0$' ($i=1,2,\cdots,N$), and this condition could be met with
a high probability when $m$ is set to be a large constant. Here, we also conduct
the following experiment to empirically analyze this condition:
We train HardNet \citep{HardNet} (a typical local descriptor network which
is trained by minimizing the classic triplet loss where the margin is set to be $m=1$)
on the Liberty subset of Brown and derive the
corresponding equal-weight student model DesDis-HardNet accordingly.
Then, we randomly sample 100K triplets
from the Liberty subset of Brown, and
100K triplets from the HPatches dataset \citep{HPatches} respectively,
by following the hard negative mining strategy used in HardNet.
The corresponding $\{m+d_i^{\mathrm{s},+}-d_i^{\mathrm{s},-}\}_{i=1}^{100000}$ from
the student model DesDis-HardNet are obtained,
and the distributions are shown in \cref{fig:desdis_hardnet_brown} and
\cref{fig:desdis_hardnet_hpatches}.
As seen from the two figure, the condition
`$m+d_i^{\mathrm{s},+}-d_i^{\mathrm{s},-}>0$' ($i=1,2,\cdots,N$) holds
true in most cases on both the Brown and HPatches datasets.
In fact, less than 100 of 100K samples do not meet this condition on
the two datasets, this is to say, this condition could be met with more than 99.9\% probability.
Moreover, we use the typical handcrafted descriptor SIFT \citep{SIFT} as the
teacher and train the student model DesDis-32
on the Liberty subset of Brown under the proposed framework.
The corresponding $\{m+d_i^{\mathrm{s},+}-d_i^{\mathrm{s},-}\}_{i=1}^{100000}$
on Brown and HPatches are shown in \cref{fig:desdis_sift_brown}
and \cref{fig:desdis_sift_hpatches} respectively. As seen from the two figures,
the condition could also be met with a high probability.

\begin{table*}[htbp]
  \begin{center}
    \caption{Comparative evaluation on the Brown dataset \citep{BrownDataset}.
        The numbers in the seven columns on the right are
        false positive rate at 95\% recall. The best results are in \textbf{bold}.
        `DesDis-' denotes
        the derived student model trained under the proposed DesDis framework.
        `$\dagger$' denotes the baseline models that are
        trained without the proposed teacher-student regularizer.
        The throughputs are evaluated on a GTX 1650Ti GPU.
    }
    \label{tab:brown}
    \newcolumntype{C}[1]{>{\centering\let\newline\\\arraybackslash\hspace{0pt}}m{#1}}
    \def\gridwidth{0.05}
    \def\basemark{$^\dagger$}
    \fs
    \begin{tabular}{lcccccccccc}
        \toprule
        {Train}            & \multirow{2}{*}{\centering \#Param.} & Throughputs   & {ND}                      & {YOS}                     & {LIB}                     & {YOS}     & {LIB}    & {ND}     & \multirow{2}{0.07\textwidth}{\centering Mean} \\
        \cmidrule(l{6pt}r{6pt}){4-5} \cmidrule(l{6pt}r{6pt}){6-7} \cmidrule(l{6pt}r{6pt}){8-9}
        {Test}             &                                      & (K patch/sec) & \multicolumn{2}{c}{LIB}   & \multicolumn{2}{c}{ND}    & \multicolumn{2}{c}{YOS}                                                                                     \\
        \midrule
        SIFT               & -                                    & -             & \multicolumn{2}{c}{29.84} & \multicolumn{2}{c}{22.53} & \multicolumn{2}{c}{27.29} & 26.55                                                                           \\
        MatchNet           & -                                    & -             & 7.04                      & 11.47                     & 3.82                      & 5.65      & 11.6     & 8.70     & 8.05                                          \\
        TFeat              & 0.60M                                & 100           & 7.39                      & 10.13                     & 3.06                      & 3.80      & 8.06     & 7.24     & 6.64                                          \\
        L2Net              & 1.33M                                & 17            & 2.36                      & 4.70                      & 0.72                      & 1.29      & 2.57     & 1.71     & 2.22                                          \\
        DOAP               & 1.33M                                & 17            & 1.54                      & 2.62                      & 0.43                      & 0.87      & 2.00     & 1.21     & 1.45                                          \\
        \midrule
        \multicolumn{10}{c}{Equal-Weight}                                                                                                                                                                                                               \\
        \midrule
        HardNet            & 1.33M                                & 17            & 1.49                      & 2.51                      & 0.53                      & 0.78      & 1.96     & 1.84     & 1.51                                          \\
        DesDis-HardNet     & 1.33M                                & 17            & \bf 1.31                  & \bf 2.03                  & \bf 0.48                  & \bf  0.72 & \bf 1.54 & \bf 1.31 & \bf 1.23                                      \\
        \cdashline{1-10}[3pt/3pt]\noalign{\smallskip}
        SOSNet             & 1.33M                                & 17            & 1.08                      & 2.12                      & \bf 0.35                  & 0.67      & 1.03     & 0.95     & 1.03                                          \\
        DesDis-SOSNet      & 1.33M                                & 17            & \bf 1.02                  & \bf 1.91                  & 0.36                      & \bf 0.60  & \bf 0.96 & \bf 0.82 & \bf 0.95                                      \\
        \cdashline{1-10}[3pt/3pt]\noalign{\smallskip}
        HyNet              & 1.34M                                & 12            & 0.89                      & \bf 1.37                  & 0.34                      & 0.61      & 0.88     & 0.96     & 0.84                                          \\
        DesDis-HyNet       & 1.34M                                & 12            & \bf 0.86                  & \bf 1.37                  & \bf 0.29                  & \bf 0.48  & \bf 0.70 & \bf 0.60 & \bf 0.71                                      \\
        \midrule
        \multicolumn{10}{c}{Light-Weight}                                                                                                                                                                                                               \\
        \midrule
        DesDis-8\basemark  & 0.08M                                & 440           & 4.06                      & 5.61                      & 1.69                      & 1.88      & 4.96     & 3.74     & 3.66                                          \\
        DesDis-8           & 0.08M                                & 440           & \bf 3.95                  & \bf 5.23                  & \bf 1.46                  & \bf 1.68  & \bf 4.69 & \bf 3.71 & \bf 3.45                                      \\
        \cdashline{1-10}[3pt/3pt]\noalign{\smallskip}
        DesDis-16\basemark & 0.19M                                & 294           & \bf 2.35                  & 3.77                      & 0.77                      & 1.13      & 2.97     & 2.18     & 2.20                                          \\
        DesDis-16          & 0.19M                                & 294           & 2.37                      & \bf 3.15                  & \bf 0.72                  & \bf 1.05  & \bf 2.59 & \bf 2.06 & \bf 1.99                                      \\
        \cdashline{1-10}[3pt/3pt]\noalign{\smallskip}
        DesDis-24\basemark & 0.33M                                & 183           & 2.04                      & 2.96                      & 0.63                      & 0.96      & 2.06     & 1.73     & 1.73                                          \\
        DesDis-24          & 0.33M                                & 183           & \bf 1.91                  & \bf 2.69                  & \bf 0.59                  & \bf 0.88  & \bf 1.87 & \bf 1.69 & \bf 1.61                                      \\
        \cdashline{1-10}[3pt/3pt]\noalign{\smallskip}
        DesDis-32\basemark & 0.50M                                & 145           & 1.81                      & 2.86                      & 0.55                      & 0.90      & 2.01     & 1.69     & 1.64                                          \\
        DesDis-32          & 0.50M                                & 145           & \bf 1.52                  & \bf 2.36                  & \bf 0.54                  & \bf 0.81  & \bf 1.68 & \bf 1.48 & \bf 1.39                                      \\
        \bottomrule[1.0pt]
    \end{tabular}
  \vspace{-0.1in}
\end{center}
\end{table*}

\begin{figure*}[htbp]
  \centering
  \input{hpatches.tex}
\end{figure*}

\section{Experimental Results}\label{sec:experiments}
In this section, we first describe the used three public datasets and 
the comparison group.
Then, we introduce the implementation details.
Next, we evaluate both the equal-weight and light-weight
student models derived from the proposed
DesDis framework on the three public datasets.
Finally, the ablation study is provided.

\subsection{Datasets and Comparison Group}\label{subsec:setup}

\noindent\textbf{Datasets: }
In this paper, we use three public datasets for evaluating our method, including
the Brown dataset \citep{BrownDataset}, the HPatches dataset \citep{HPatches}
and the ETH SfM dataset \citep{ETH}.

The Brown dataset \citep{BrownDataset} is the most widely used patch
dataset for evaluating the patch verification performance of local descriptors.
It consists of three subsets: {\it Liberty}, {\it Notredame} and {\it Yosemite}.
The test set consists of 100K matching and non-matching pairs for each sequence.

The HPatches dataset \citep{HPatches} consists of over 1.5 million patches extracted from
116 viewpoint and illumination changing scenes.
According to the geometric noise levels, the extracted patches are classified into
the {\it easy}, {\it hard}, and {\it tough} groups respectively. Three tasks are performed
on the three groups of patches, including  patch verification, image matching, and patch
retrieval.

The ETH benchmark \citep{ETH} is designed for image-based 3D reconstruction.
We follow the setup in \citep{SOSNet,HyNet}, \textit{i.e.},
all learning-based methods are trained on the {\it Liberty}
subset of Brown \citep{BrownDataset}. Since the
patches used in the comparative methods are not given in the original papers
\citep{SOSNet,HyNet}, we extract the patches using the DoG detector, and use these
patches for all the methods.

\noindent\textbf{Comparison Group: }
As indicated in \cref{sec:introduction}, under the proposed DesDis framework,
not only a more discriminative equal-weight student model,
but also a set of light-weight student models that achieve a trade-off between
computational accuracy and speed could be derived.
Accordingly, the comparative evaluation is divided into two groups:
In \cref{subsec:equalweight}, we evaluate the equal-weight student models
derived under the proposed DesDis framework.
In \cref{subsec:lightweight}, we evaluate the light-weight student models.

\subsection{Implementation Details}\label{subsec:details}
In the equal-weight student model scenario, we firstly train the teacher models by
implementing the code released by \citet{SOSNet,HyNet}
for 200 epochs with the learning rate of 0.01 and the batch size of 1024.
We use Adam optimizer with $\alpha=0.01$, $\beta_1=0.9$
and $\beta_2=0.999$.
Then, the trained teacher models are used for deriving
the corresponding student models under the proposed framework.
We set the weights $\alpha_\textrm{p}$ and $\alpha_\textrm{n}$ in \cref{eq:Lobj}
to 1 and 15 respectively.

In the light-weight student model scenario, both $\alpha_\textrm{p}$ and $\alpha_\textrm{n}$
are set to 9. The models are trained for 200 epochs with the learning rate of
0.01 and Adam optimizer. We adopt HyNet \citep{HyNet} as the teacher model for
all the light-weight student models.

\subsection{Equal-Weight Student Models Under DesDis}\label{subsec:equalweight}

In this subsection, we use three typical local descriptor learning networks
HardNet \citep{HardNet}, SOSNet \citep{SOSNet}, HyNet \citep{HyNet} as the teacher
models under the proposed DesDis framework, and derive the corresponding equal-weight
students, denoted as DesDis-HardNet, DesDis-SOSNet, DesDis-HyNet.

\subsubsection{Evaluation on Brown}
We evaluate the
derived three equal-weight student models in comparison to their teacher models and
6 additional state-of-the-arts methods, including SIFT \citep{SIFT},
DeepDesc \citep{DeepDesc}, MatchNet \citep{MatchNet}, TFeat \citep{TNet}, L2Net \citep{L2Net}
and DOAP \citep{DOAP}. As done in \citep{TNet,HardNet,SOSNet,HyNet},
all the learning-based methods are trained on one subset, and then tested on the other two.

The false positive rates at 95\% recall by all the comparative methods are reported in
\cref{tab:brown}. As seen from this table, the performances of the three derived
equal-weight student models are better than their corresponding teachers, and the derived
DesDis-HyNet performs best among all the comparative methods. These results
demonstrate that the proposed DesDis framework with the designed TS regularizer
could effectively boost the performances of the existing networks.


\subsubsection{Evaluation on HPatches}

We evaluate the three derived equal-weight student models in comparison to
their teacher models HardNet \citep{HardNet}, SOSNet \citep{SOSNet}, HyNet \citep{HyNet},
the relatively shallow network TFeat \citep{TNet} and the handcrafted descriptor
SIFT \citep{SIFT}.

As done in previous works \citep{HardNet,DOAP,SOSNet,HyNet}, we train the models on the
  {\it Liberty} subset of Brown, and then evaluate on the test split `a' of HPatches.
The corresponding mean Average Precision (mAP) of each method are reported in
\cref{fig:hpatches_a}. As seen from this figure, the derived student models
outperform their teachers respectively, and the derived DesDis-HyNet (DesDis-Hy)
performs best among all the comparative methods, which are consistent with those
on the Brown dataset as reported above.

\subsubsection{Evaluation on ETH}

\begin{table*}[htbp]
  \caption{Evaluation results on the ETH benchmark \citep{ETH} for SfM tasks.
    `$\dagger$' denotes the baseline models that are
    trained without the proposed teacher-student regularizer.
    The best results among equal-weight models are marked in \textcolor{red}{red}.
    The best results among our light-weight models are marked in \textcolor{blue}{blue}.
  }
  \label{tab:ETH}
  \begin{center}
    \newcolumntype{C}[1]{>{\centering\let\newline\\\arraybackslash\hspace{0pt}}m{#1}}
    \newcolumntype{L}[1]{>{\raggedright\let\newline\\\arraybackslash\hspace{0pt}}m{#1}}
    \newcommand{\red}[1]{\textcolor[rgb]{1,0.3,0.3}{#1}}
    \newcommand{\kd}[1]{\textcolor[rgb]{0.3,0.3,1}{#1}}
    \def\gridwidth{0.1}
    \def\basemark{$^\dagger$}
    \fs
    \begin{tabular}{C{0.13\textwidth}L{0.15\textwidth}C{0.08\textwidth}C{\gridwidth\textwidth}C{0.13\textwidth}C{\gridwidth\textwidth}C{0.09\textwidth}}
      \toprule
                      &                    & \bf \#Reg. Images & \bf \#Sparse Points & \bf \#Observations & \bf Track Length & \bf Reproj. Error \\
      \midrule
      \bf Herzjesu    & SIFT               & 8                 & 7.7K                & 30K                & 3.90             & {0.39px}          \\
      \bf 8 images    & TFeat              & 8                 & 7.7K                & 30K                & 3.90             & 0.41px            \\
                      & HardNet            & 8                 & 8.8K                & 36K                & 4.04             & 0.44px            \\
                      & DesDis-HardNet     & 8                 & 8.7K                & 36K                & 4.05             & 0.44px            \\
                      & SOSNet             & 8                 & 8.8K                & 36K                & {4.06}           & 0.44px            \\
                      & DesDis-SOSNet      & 8                 & 8.8K                & 36K                & {4.06}           & 0.44px            \\
                      & HyNet              & 8                 & 8.9K                & 36K                & 4.05             & 0.45px            \\
                      & DesDis-HyNet       & 8                 & \red{9.1K}          & \red{37K}          & 4.05             & 0.45px            \\
      \cdashline{2-7}[3pt/3pt]\noalign{\smallskip}
                      & DesDis-8\basemark  & 8                 & 8.4K                & 33.8K              & 4.03             & {0.42px}          \\
                      & DesDis-8           & 8                 & 8.4K                & 34.0K              & 4.03             & {0.42px}          \\
                      & DesDis-32\basemark & 8                 & 8.6K                & 34.7K              & {4.04}           & 0.45px            \\
                      & DesDis-32          & 8                 & \kd{8.7K}           & \kd{35.2K}         & {4.04}           & 0.43px            \\
      \midrule
      \bf Fountain    & SIFT               & 11                & 16.6K               & 76K                & 4.57             & {0.34px}          \\
      \bf 11 images   & TFeat              & 11                & 16.4K               & 75K                & 4.55             & 0.35px            \\
                      & HardNet            & 11                & 17.7K               & 83K                & 4.69             & 0.38px            \\
                      & DesDis-HardNet     & 11                & 17.7K               & 83K                & 4.69             & 0.38px            \\
                      & SOSNet             & 11                & 17.7K               & 83K                & {4.71}           & 0.38px            \\
                      & DesDis-SOSNet      & 11                & 17.7K               & 84K                & {4.71}           & 0.38px            \\
                      & HyNet              & 11                & 17.9K               & 84K                & 4.70             & 0.39px            \\
                      & DesDis-HyNet       & 11                & \red{18.0K}         & \red{85K}          & {4.71}           & 0.39px            \\
      \cdashline{2-7}[3pt/3pt]\noalign{\smallskip}
                      & DesDis-8\basemark  & 11                & 17.3K               & 81K                & 4.65             & {0.36px}          \\
                      & DesDis-8           & 11                & 17.4K               & 81K                & 4.66             & 0.37px            \\
                      & DesDis-32\basemark & 11                & 17.5K               & 82K                & {4.68}           & 0.38px            \\
                      & DesDis-32          & 11                & \kd{17.7K}          & \kd{83K}           & 4.67             & 0.38px            \\
      \midrule
      \bf Madrid      & SIFT               & 424               & 93K                 & 562K               & 6.02             & {0.59px}          \\
      \bf Metropolis  & TFeat              & 402               & 81K                 & 479K               & 5.89             & 0.61px            \\
      \bf 1344 images & HardNet            & 460               & 129K                & 778K               & 6.05             & 0.66px            \\
                      & DesDis-HardNet     & 478               & 134K                & 799K               & 5.96             & 0.68px            \\
                      & SOSNet             & 472               & 130K                & 793K               & {6.09}           & 0.68px            \\
                      & DesDis-SOSNet      & 481               & 137K                & 810K               & 5.93             & 0.67px            \\
                      & HyNet              & 478               & 141K                & 840K               & 5.94             & 0.68px            \\
                      & DesDis-HyNet       & \red{486}         & \red{145K}          & \red{847K}         & 5.86             & 0.69px            \\
      \cdashline{2-7}[3pt/3pt]\noalign{\smallskip}
                      & DesDis-8\basemark  & 436               & 107K                & 671K               & {6.25}           & {0.64px}          \\
                      & DesDis-8           & 448               & 117K                & 704K               & 6.02             & {0.64px}          \\
                      & DesDis-32\basemark & 449               & 125K                & 747K               & 5.96             & 0.66px            \\
                      & DesDis-32          & \kd{455}          & \kd{127K}           & \kd{769K}          & 6.04             & 0.67px            \\
      \midrule
      \bf Gendar-     & SIFT               & 936               & 304K                & 1449K              & 4.76             & {0.72px}          \\
      \bf menmarkt    & TFeat              & 900               & 274K                & 1266K              & 4.61             & 0.75px            \\
      \bf 1463 images & HardNet            & 993               & 368K                & 1900K              & 5.16             & 0.78px            \\
                      & DesDis-HardNet     & 994               & 373K                & 1919K              & 5.14             & 0.79px            \\
                      & SOSNet             & 986               & 371K                & 1917K              & {5.17}           & 0.78px            \\
                      & DesDis-SOSNet      & \red{1021}        & \red{401K}          & \red{2042K}        & 5.09             & 0.77px            \\
                      & HyNet              & 982               & 386K                & 1993K              & {5.17}           & 0.79px            \\
                      & DesDis-HyNet       & 994               & 389K                & 2016K              & {5.17}           & 0.79px            \\
      \cdashline{2-7}[3pt/3pt]\noalign{\smallskip}
                      & DesDis-8\basemark  & 971               & 339K                & 1726K              & 5.10             & {0.76px}          \\
                      & DesDis-8           & 977               & 338K                & 1732K              & {5.12}           & {0.76px}          \\
                      & DesDis-32\basemark & 972               & 362K                & 1846K              & 5.10             & 0.77px            \\
                      & DesDis-32          & \kd{1017}         & \kd{400K}           & \kd{2019K}         & 5.05             & 0.77px            \\
      \bottomrule[1.0pt]
    \end{tabular}
  \end{center}
\end{table*}

The comparative evaluation results on the ETH benchmark are reported
above the dashed lines in \cref{tab:ETH}.
As seen from this table, SIFT obtains the smallest re-projection error, and a
method that obtains a smaller number of matching points is prone to obtaining a lower
re-projection error, which is consistent with the observations in \citep{GeoDesc,SOSNet,HyNet}.
However, since the calculated re-projection errors by all the comparative methods
are smaller than 1 pixel, this metric might not sufficiently reflect the performance
differences among different descriptors as indicated in \citep{SOSNet}. It is further
noted that the derived equal-weight student models achieve close performances to their
teachers on the two small-scale high-quality sequences {\it Herzjesu} and {\it Fountain}.
But for the two large-scale sequences ({\it Madrid Metropolis} and {\it Gendarmenmarkt})
with high noises, compared with their teachers, the derived equal-weight student models
obtain close mean track lengths and achieve significantly better performances under the three
protocols (the number of the registered images, the number of the reconstructed sparse
points and the number of observations) in most cases, showing that the derived models
under DesDis are more effective for handling large-scale noisy data.

\subsection{Light-Weight Student Models Under DesDis}\label{subsec:lightweight}

In this subsection, we evalue the efficiency of the proposed framework for deriving 
light-weight student models.
As done in the above evaluation on the equal-weight models, the derived light-weight
models are also evaluated on Brown, HPatches and ETH, in comparison to several
state-of-the-arts methods. Moreover, we evaluate the architectures
of these light-weight models trained with only the loss term from HyNet
\citep{HyNet} as baselines for further demonstrating the effectiveness of the proposed
TS regularizer, denoted as DesDis-8$^\dagger$, DesDis-16$^\dagger$,
DesDis-24$^\dagger$, DesDis-32$^\dagger$ respectively.

\begin{figure*}[t!]
  \def\scale{0.22}
  \def\inv{0.2in}
  \centering
  \subfloat[]{
    \includegraphics[width=\scale\textwidth]{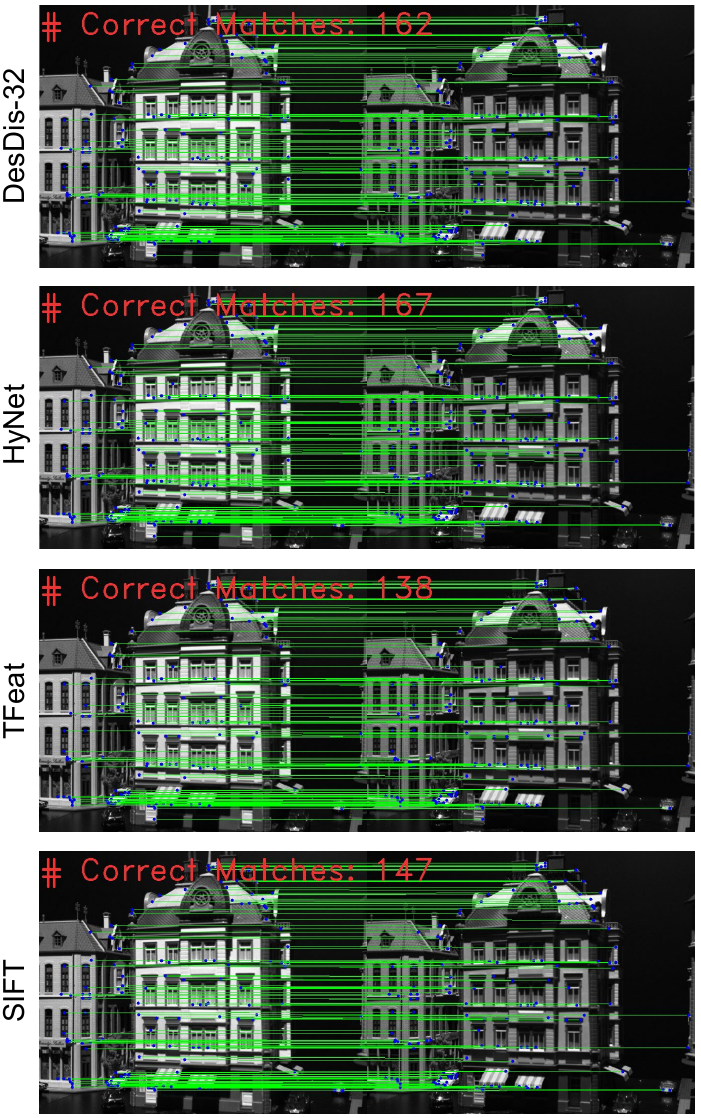}
    \label{fig:yello}
  }
  \hspace{\inv}
  \subfloat[]{
    \includegraphics[width=\scale\textwidth]{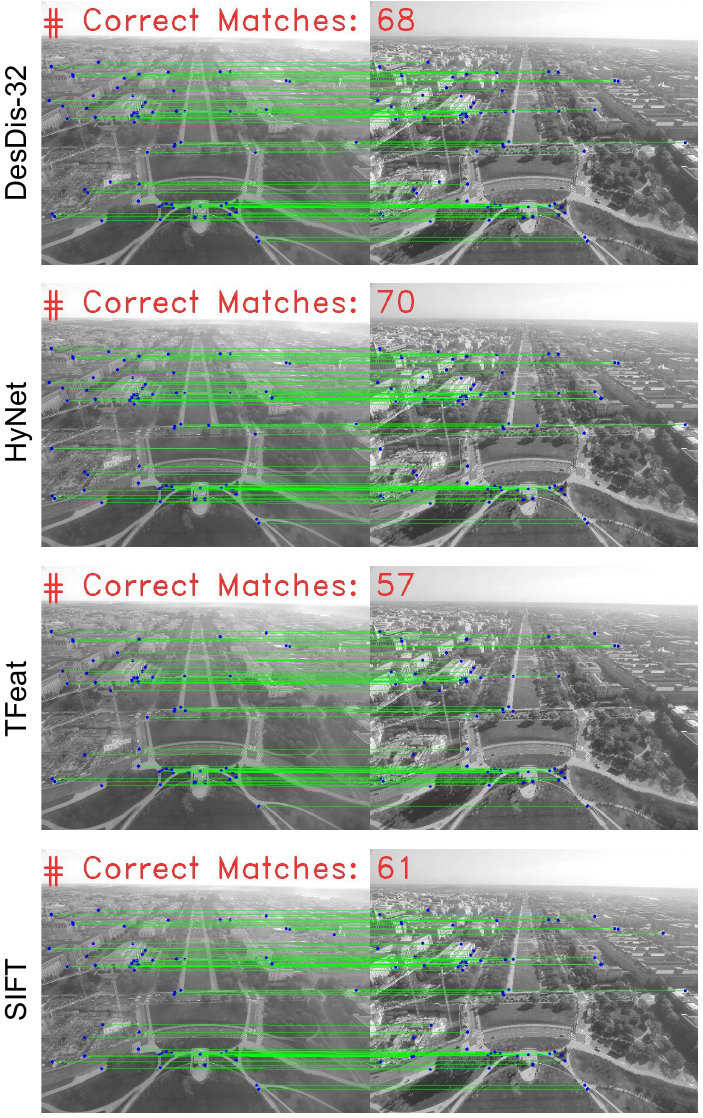}
    \label{fig:dc}
  }
  \hspace{\inv}
  \subfloat[]{
    \includegraphics[width=\scale\textwidth]{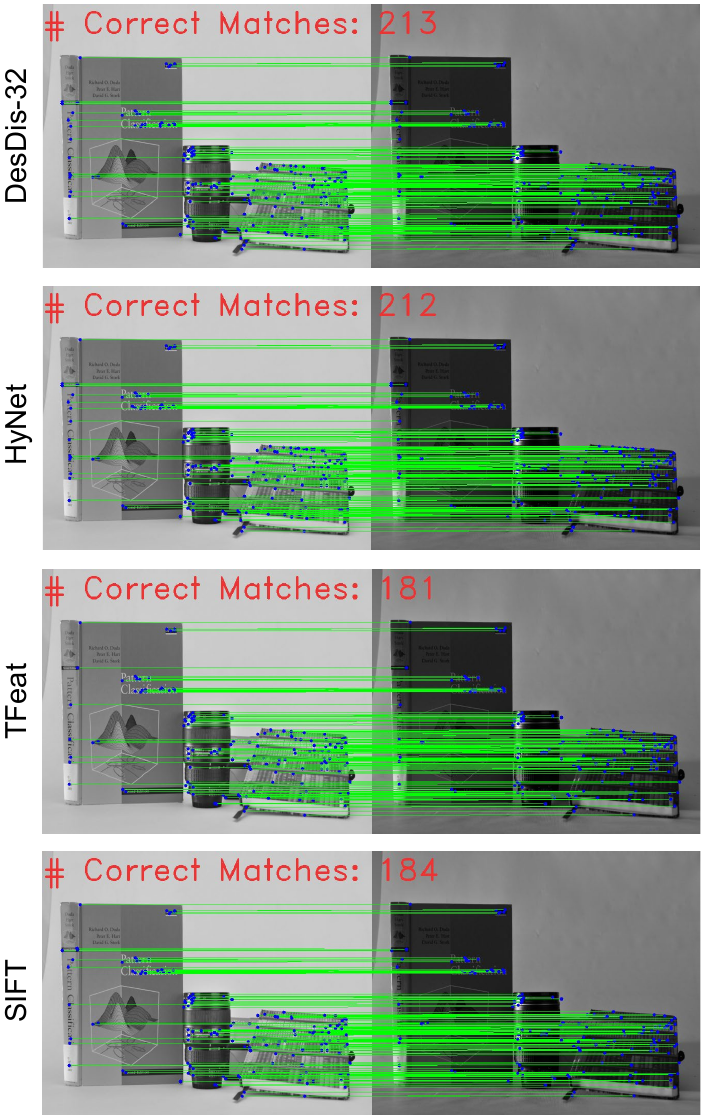}
    \label{fig:duda}
  }\\
  \subfloat[]{
    \includegraphics[width=\scale\textwidth]{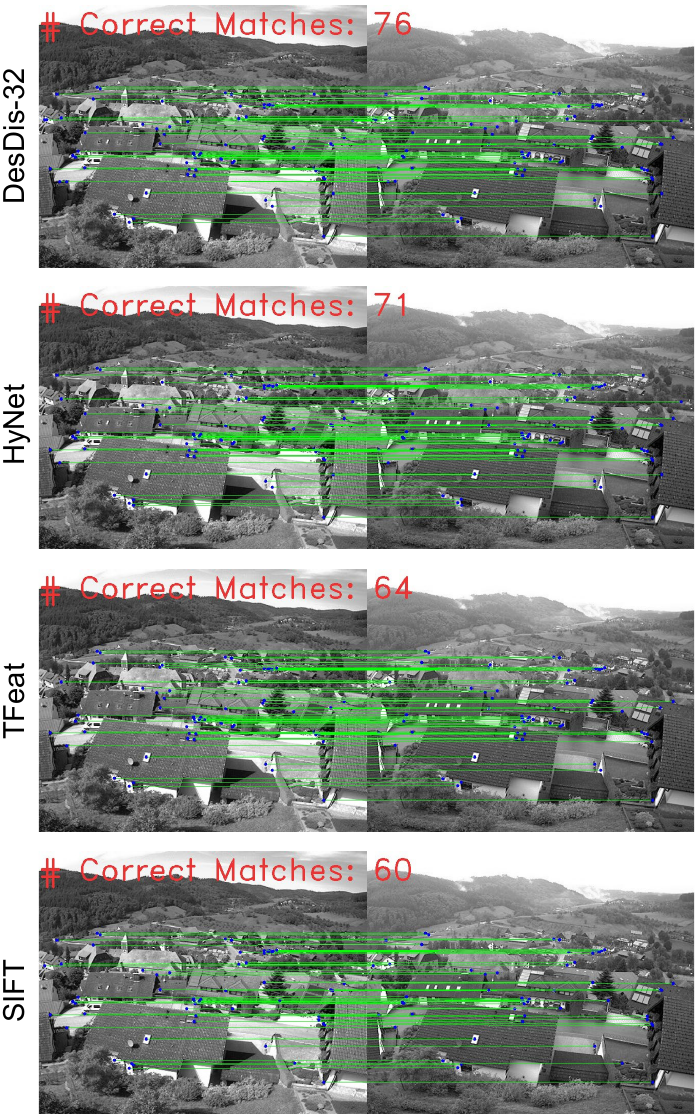}
    \label{fig:village}
  }
  \hspace{\inv}
  \subfloat[]{
    \includegraphics[width=\scale\textwidth]{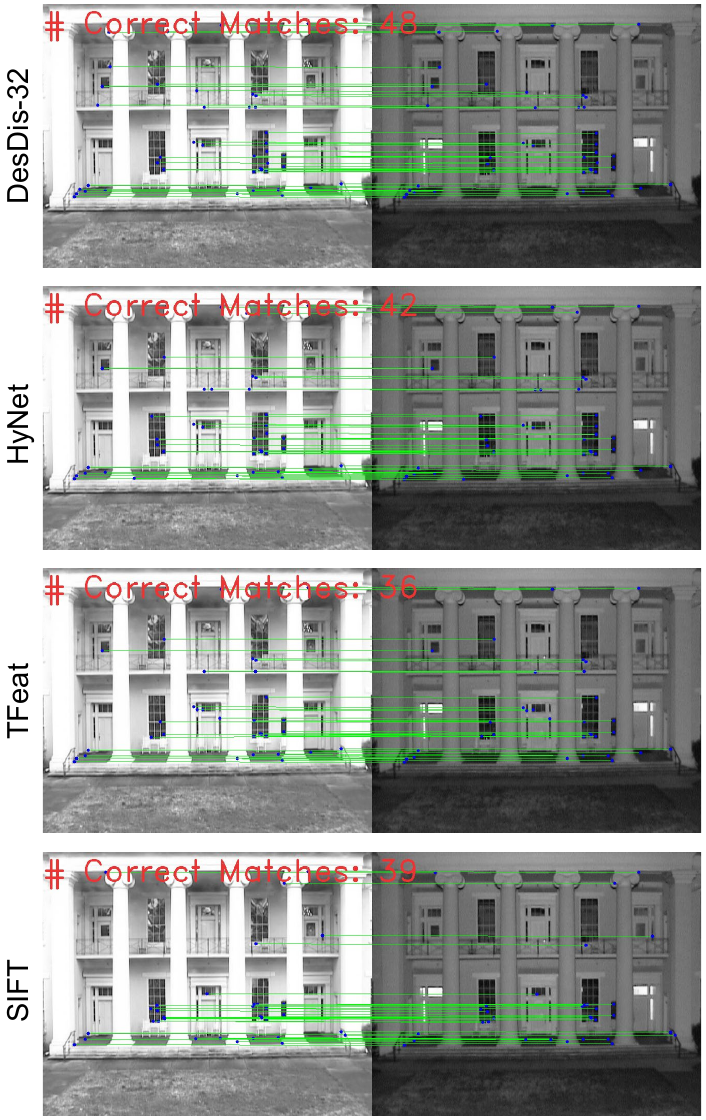}
    \label{fig:kion}
  }
  \hspace{\inv}
  \subfloat[]{
    \includegraphics[width=\scale\textwidth]{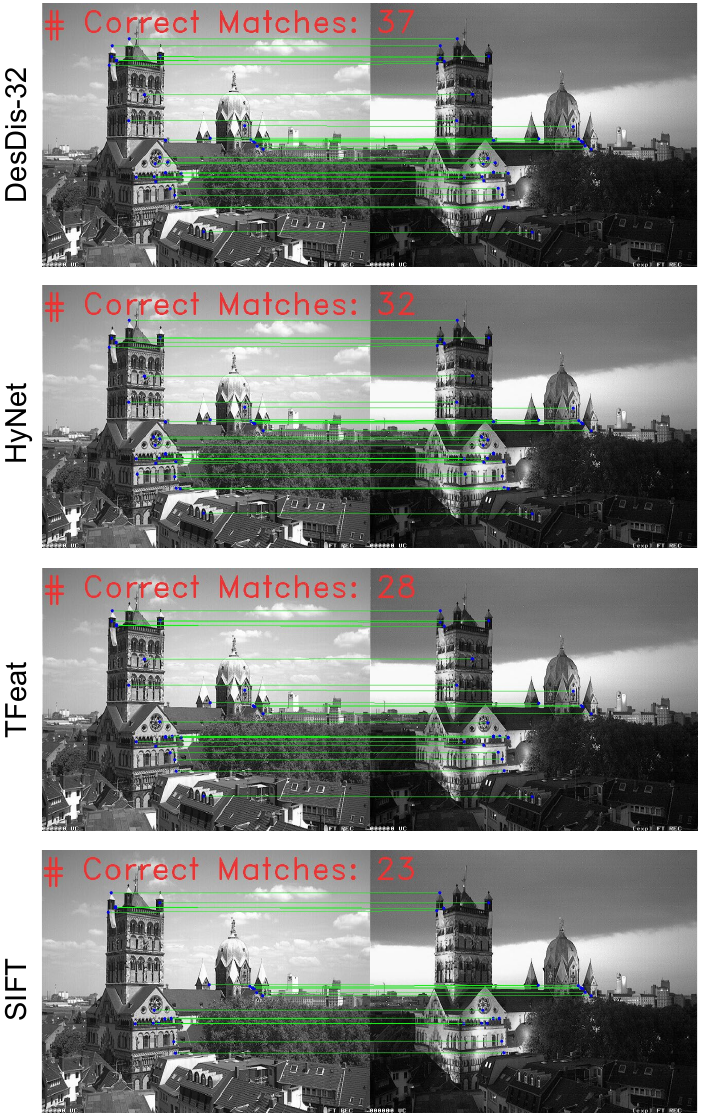}
    \label{fig:castle}
  }
  \caption{Image matching visualization on HPatches by the proposed DesDis-32,
    HyNet, TFeat and SIFT. The models are trained on the Liberty subset of Brown.
    The number of correct matches is shown in the upper left corner of each image pair.}
  \vspace{-0.1in}
\end{figure*}

\begin{figure*}
  \def\scale{1}
  \subfloat[Examples where 
    the teacher model succeeds (solid lines) but the student model fails (dashed lines).]{
    \includegraphics[width=\scale\textwidth]{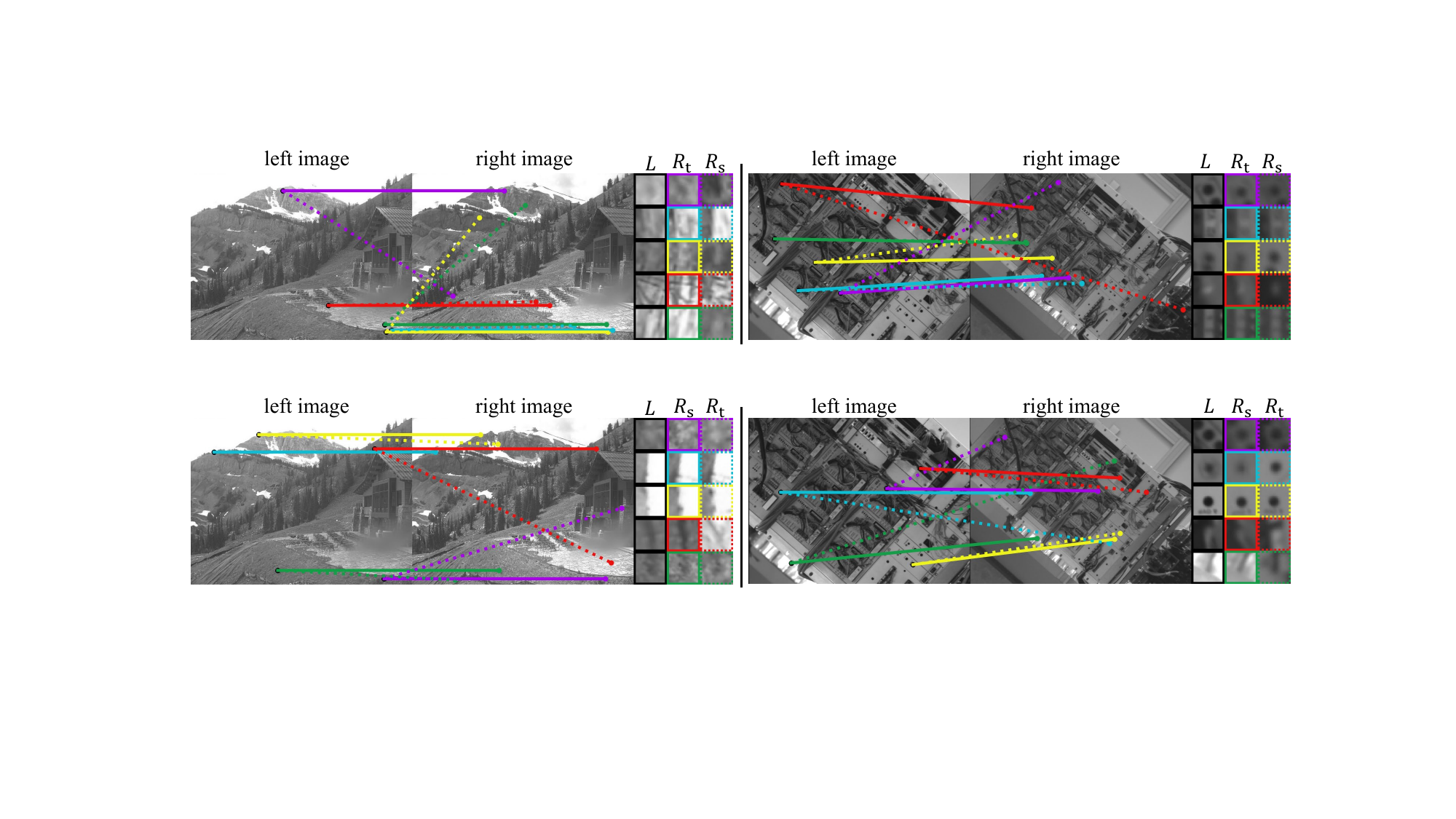}
    \label{fig:tea}
  }\\
  \subfloat[Examples where   the student model succeeds (solid lines) 
    but the teacher model fails (dashed lines).]{
    \includegraphics[width=\scale\textwidth]{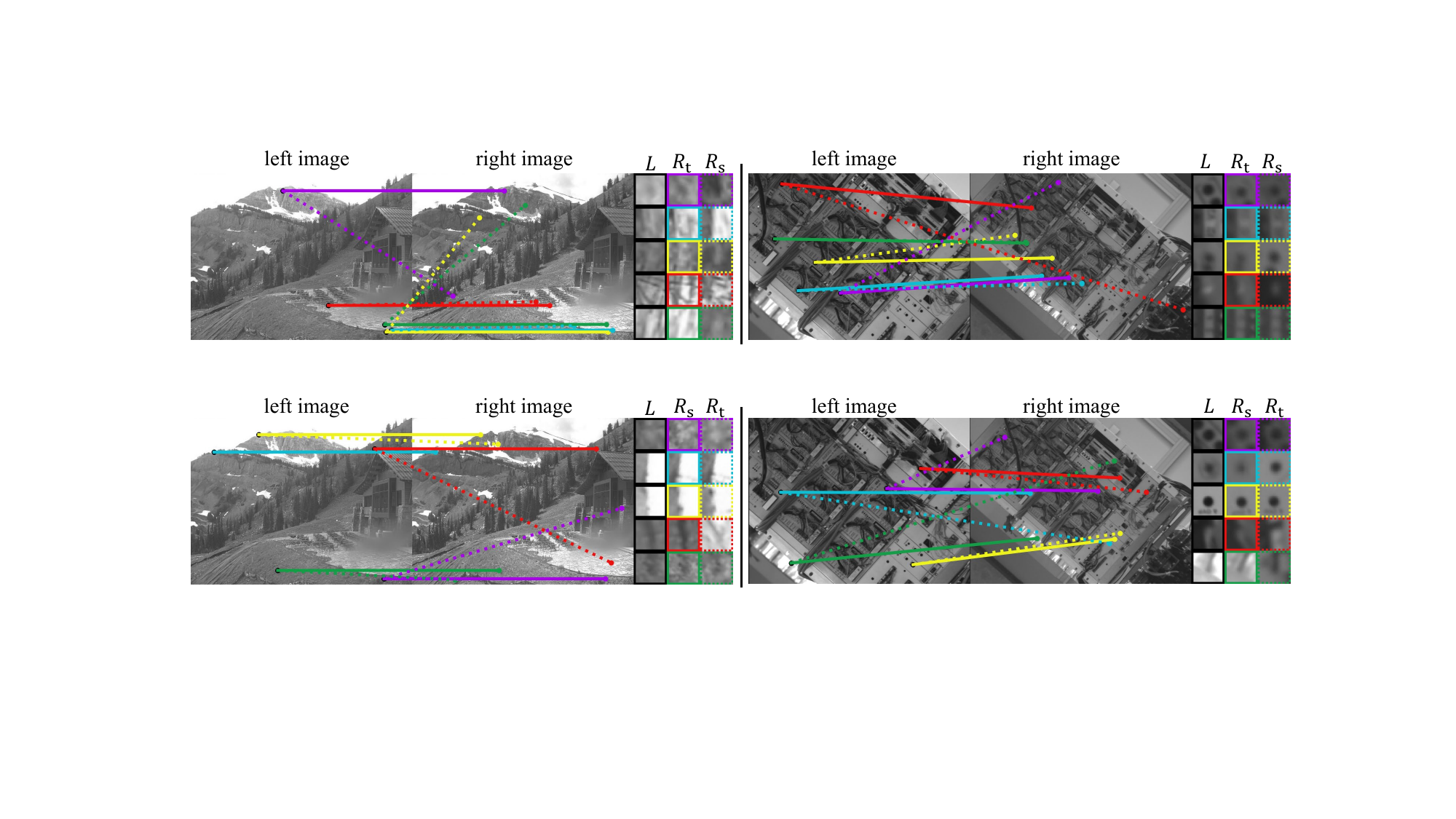}
    \label{fig:stu}
  }
\caption{
Visualization of the keypoint matches by the teacher HyNet and the corresponding 
student DesDis-32 on HPatches.  
The left part of each group shows a pair of images, where five point matches by 
both the teacher and student are connected with colorful lines. 
The right part of each group shows the corresponding local patches to the five keypoints, 
and in each row, $L$ corresponds to a keypoint in the left image, 
$R_\mathrm{t}$ corresponds to the matched keypoint by the teacher in the right image, 
and $R_\mathrm{s}$ corresponds to the matched keypoint by the student in the right image.
}
\label{fig:corner}
\end{figure*}

\begin{table}[t]
  \centering
  \caption{Comparison of inference speeds (average number of images processed per second)
    on the HPatches dataset with 2048 keypoints. All the methods
    are evaluated on a GTX 1650Ti GPU with a batch size of 1024.
  }
  \label{tab:hp_speed}
  \fs
  \begin{tabular}{cc}
    \toprule
    Method         & Throughputs (images/sec) \\
    \midrule
    TFeat          & 72                       \\
    L2Nte          & 14                       \\
    DOAP           & 14                       \\
    HardNet/       & \multirow{2}{*}{14}      \\
    DesDis-HardNet &                          \\
    SOSNet/        & \multirow{2}{*}{14}      \\
    DesDis-SOSNet  &                          \\
    HyNet/         & \multirow{2}{*}{9}       \\
    DesDis-HyNet   &                          \\
    \midrule
    DesDis-8       & 239                      \\
    DesDis-16      & 210                      \\
    DesDis-24      & 126                      \\
    DesDis-32      & 99                       \\
    \bottomrule
  \end{tabular}
  \vspace{-0.1in}
\end{table}

\subsubsection{Evaluation on Brown}
The false positive rates at 95\% recall by these light-weight models on
the Brown dataset \citep{BrownDataset} are reported in \cref{tab:brown}.
In addition, as done in \citep{L2Net,TNet}, the numbers of patches processed
per second by the comparative methods are also reported in \cref{tab:brown} for
comparing their inference speeds.
The following three points are observed from this table:
(i) All the light-weight student models surpass their baselines that are trained
without the proposed TS regularizer, demonstrating the effectiveness of the
designed TS regularizer for improving the performances of light-weight models;
(ii) Compared with SOSNet \citep{SOSNet} and HyNet \citep{HyNet}, the light-weight
student models achieve worse performances, but smaller computational memories and
faster speeds. This is mainly because the light-weight student models need much fewer
parameters and much simpler architectures as in \cref{tab:brown}, their ability
of feature representation is weakened accordingly, but their computational
costs are significantly decreased;
(iii) Compared with TFeat \citep{TNet} which is a relatively fast descriptor in the existing
works, our smallest student model DesDis-8 achieves a significantly better performance with
a much faster speed. With slightly or significantly better performances, our light-weight
models achieve 8 times or even faster speeds for processing patches
(e.g. DesDis-16 vs L2Net: $\approx$17.3 times faster;
DesDis-32 vs HardNet/DOAP: $\approx$8.5 times faster).

\subsubsection{Evaluation on HPatches}
The comparative results of the light-weight models with/without the TS regularizer
on the HPatches dataset \citep{HPatches} are
reported in \cref{fig:hpatches_b}.
As seen from this figure, the derived models
(DesDis-8, DesDis-16, DesDis-24, DesDis-32) with the TS regularizer consistently
outperform their counterparts without it. These results further
demonstrate the effectiveness of the designed TS regularizer.
In addition, in order to further compare the inference speeds of the
comparative methods for images (rather than patches as reported in \cref{tab:brown}),
\cref{tab:hp_speed} reports the average number of images processed
per second on the HPatches benchmark,
including 6 existing methods  (TFeat, L2Net, DOAP, HardNet, SOSNet, HyNet),
the proposed 3 equal-weight student models (DesDis-HardNet, DesDis-SOSNet, DesDis-HyNet),
and the proposed 4 light-weight student models (DesDis-8, DesDis-16, DesDis-24, DesDis-32).
All the methods use a fixed number (2048) of keypoints, and are evaluated on
a 1650Ti GPU with a batch size of 1024.
As seen from this table, our light-weight student models still have significantly faster
inference speeds than the other comparative methods.

\subsubsection{Evaluation on ETH}
Considering that it is much time-consuming to perform the structure from motion task on
large-scale scenes in the ETH dataset \citep{ETH}, we only evaluate the smallest variant
DesDis-8 and the largest variant DesDis-32 among the derived four light-weight models,
as well as their baseline models DesDis-8$^\dagger$ and DesDis-32$^\dagger$ respectively.
The results are reported in \cref{tab:ETH}. As noted from this table, in terms of
the numbers of registered images, reconstructed sparse points and observations,
DesDis-8 and DesDis-32 perform better than SIFT \citep{SIFT}, TFeat \citep{TNet} and their
corresponding baselines.  It's also worth noting that in terms of the aforementioned 3
metrics, our DesDis-32 can surpass HardNet, SOSNet and HyNet on the largest
scene {\it Gendarmenmarkt}, and it achieves a close performance to HardNet on other scenes
with an 8.5 times faster speed. All the results demonstrate that the derived light-weight
models from the proposed DesDis framework could achieve an effective
trade-off between computational accuracy and speed.

\begin{figure*}[t]
  \centering
  \includegraphics[width=0.85\textwidth]{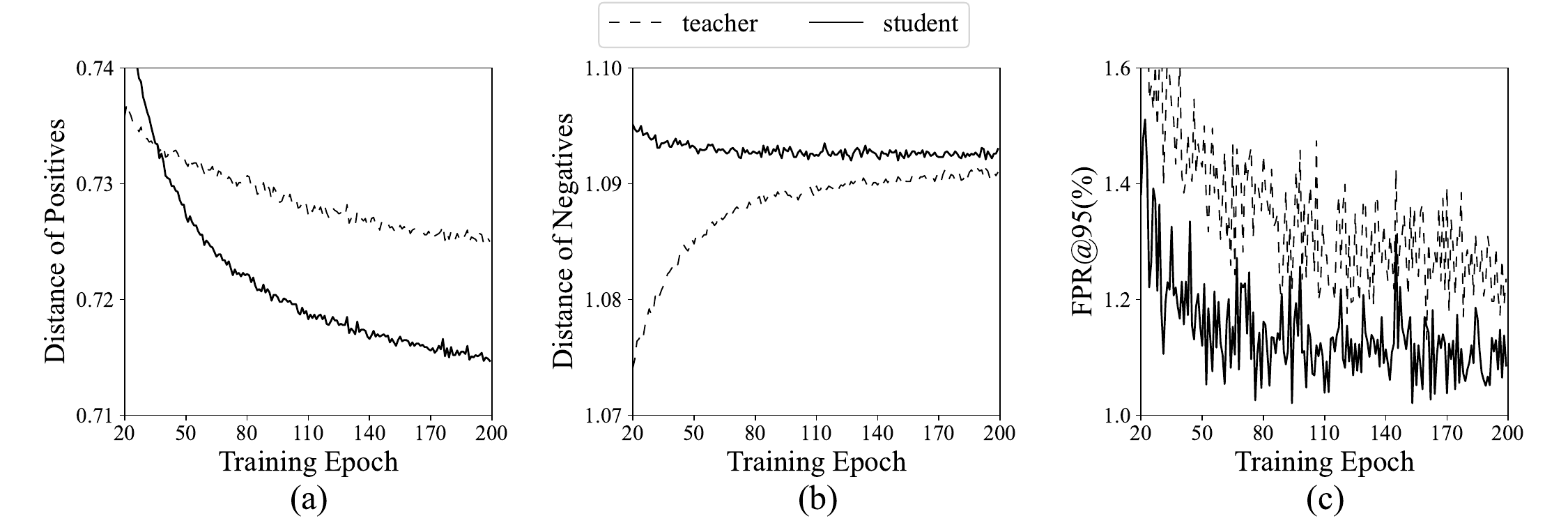}
  \vspace{-0.06in}
  \caption{
    Comparison of positive/negative pair distance and the model
    performance during training. The model is trained on
    the Liberty subset of the Brown dataset, and then tested on the Yosemite
    and Notredame subsets.
    (a) The distance of positive pairs during the training procedure.
    (b) The distance of negative pairs during the training procedure.
    (c) The false positive rate at 95\% recall during training.
  }
  \label{fig:distance}
  \vspace{-0.1in}
\end{figure*}

\subsubsection{Visualization Results}

We provide several image matching visualization results from the HPatches
dataset \citep{HPatches} by the proposed light-weight model DesDis-32 and
three typical descriptors, including the DesDis-32's teacher model HyNet
\citep{HyNet}, TFeat \citep{TNet} (a relatively fast descriptor in literature)
and the hand-crafted SIFT \citep{SIFT}. The keypoints are all detected by the DoG
detector. As seen from
Fig. \ref{fig:yello}-\ref{fig:castle}, DesDis-32 and HyNet are able to obtain
a larger number of correct correspondences than SIFT and TFeat. In most
cases, DesDis-32 could achieve a close number of correct matches to HyNet
(Fig. \ref{fig:yello}-\ref{fig:duda}), while in some challenging scenes,
DesDis-32 could even produce more correct matches
(Fig. \ref{fig:village}-\ref{fig:castle}). It has to be pointed out that
the complex architecture of HyNet results in a small throughput (12K
patch/sec on a GTX 1650Ti), and our light-weight model DesDis-32 runs at 145K patch/sec
($12\times$ faster) while maintaining a comparable image matching accuracy.

\cref{fig:corner} further compares the keypoint matching results by the 
teacher HyNet and its student DesDis-32 respectively 
(including wrongly/correctly matched keypoint and their 
corresponding local patches) on HPatches.
\cref{fig:tea} shows the cases that succeed in the teacher model 
but fail in the student model. 
\cref{fig:stu} shows the cases that succeed in the student model but fail in the teacher model.
It is found from these two figures
that local patches with similar textures are challenging for both the teacher 
and student models, in other words, for a same pair of images, the teacher 
succeeds in discriminating some pairs of local patches with similar textures 
whereas the student fails, and the teacher also fails in discriminating some 
other pairs of local patches with similar textures whereas the student succeeds.

\subsection{Ablation Study}

\subsubsection{Impact of the Proposed TS Regularizer}\label{subsubsec:dist}
As analyzed in \cref{sec:analysis}, the TS regularizer could help the student
model predict a smaller distance of positive pairs and a larger distance for
negative pairs than its teacher network, leading to a better performance. To verify
this theoretical analysis, we firstly record the mean distance of
positive pairs (also negative pairs)  by the teacher and student models
on the Liberty subset of the Brown dataset \citep{BrownDataset} at each training epoch.
Accordingly, we visualize the corresponding curves of the mean distance of
positive pairs in \cref{fig:distance}\textcolor{blue}{a},
as well as  the curves of the mean distance of negative pairs  in
\cref{fig:distance}\textcolor{blue}{b}.
As seen from the two figures, when the number of epochs becomes larger than around 40,
the student could predict a smaller distance for positive pairs than its teacher,
and a larger distance for negative pairs.
These results demonstrate the aforementioned deduction from \cref{prop}
to some extent. Secondly, for each trained model corresponding to
each training epoch on the Liberty subset, we test it on the other two subsets
of the Brown dataset and record its mean false positive rate at 95\% recall.
Accordingly, we visualize the corresponding false positive rate curves of
the teacher and student models in \cref{fig:distance}\textcolor{blue}{c}.
As seen from this figure, the student model could outperform its teacher.
It is also noted that the distance differences between the student and teacher model
illustrated in \cref{fig:distance}\textcolor{blue}{a}
and \cref{fig:distance}\textcolor{blue}{b} is smaller than the optimal
solution indicated by \cref{eqn:opt}, indicating the student model could not find the
global optimal solution. However,
it should be pointed out that \cref{prop} is not intended to prove whether a
student model could obtain a globally optimal solution or not, but to provide
a theoretical basis (regardless of whether a globally optimal solution could
be obtained or not) that the distances of positive pairs in the student
model would become smaller than those in the teacher
model at the training stage, while the distances of negative pairs in the student
model would become larger than those in the teacher model.

\begin{table}[t]
  \centering
  \caption{
    Results by setting the distance supervision $d_\mathrm{p}$ for positive
    pairs to \{0.5, 0.6, 0.7\} and the distance supervision $d_\mathrm{n}$ for
    negative pairs to \{1.0, 1.1, 1.2\}.
    The numbers in the three columns on the right are
    the false positive rate at 95\% recall.
  }
  \label{tab:fix}
  \fs
  \begin{tabular}{cccc}
    \toprule
    Method                               & YOS$\downarrow$ & ND$\downarrow$ & Mean$\downarrow$ \\
    \midrule
    HyNet (baseline)                     & 0.88            & 0.34           & 0.61             \\
    \midrule
    $d_\mathrm{p}=0.5, d_\mathrm{n}=1.0$ & 1.40            & 0.35           & 0.88             \\
    $d_\mathrm{p}=0.5, d_\mathrm{n}=1.1$ & 1.18            & 0.35           & 0.76             \\
    $d_\mathrm{p}=0.5, d_\mathrm{n}=1.2$ & 1.11            & 0.31           & 0.71             \\
    $d_\mathrm{p}=0.6, d_\mathrm{n}=1.0$ & 1.10            & 0.36           & 0.73             \\
    $d_\mathrm{p}=0.6, d_\mathrm{n}=1.1$ & 1.22            & 0.32           & 0.77             \\
    $d_\mathrm{p}=0.6, d_\mathrm{n}=1.2$ & 0.92            & 0.34           & 0.63             \\
    $d_\mathrm{p}=0.7, d_\mathrm{n}=1.0$ & 1.05            & 0.35           & 0.70             \\
    $d_\mathrm{p}=0.7, d_\mathrm{n}=1.1$ & 0.94            & 0.39           & 0.67             \\
    $d_\mathrm{p}=0.7, d_\mathrm{n}=1.2$ & 0.99            & 0.37           & 0.68             \\
    \midrule
    \textbf{DesDis-HyNet}                & \textbf{0.70}   & \textbf{0.29}  & \textbf{0.50}    \\
    \bottomrule
  \end{tabular}
\end{table}

To further evaluate the importance of the proposed TS regularizer,
we conduct the following experiment by replacing the teacher supervisions in
the TS regularizer with fixed distance supervisions,
i.e.,  we replace the positive distances predicted by the teacher model
in \cref{eq:LTSP} with a fixed low distance $d_\mathrm{p}$, and
replace the negative distances in \cref{eq:LTSN}
with a fixed high distance $d_\mathrm{n}$:
Firstly, we train HyNet \citep{HyNet} on the \textit{Liberty} subset of Brown,
and obtain the mean distance of descriptor pairs (around 0.7 for positive pairs and
1.0 for negative pairs) on the \textit{Liberty} subset by HyNet.
Then, considering that the mean distance of positive/negative
pairs of HyNet is approximately 0.7/1.0, we train DesDis-HyNet by setting
$d_\mathrm{p}$ for positive pairs to $\{0.5,0.6,0.7\}$,
and $d_\mathrm{n}$ for negative pairs to $\{1.0,1.1,1.2\}$.
The corresponding results are shown in \cref{tab:fix}.
As seen from this table, such fixed distances does not work as well as
using a teacher model under the proposed framework.
It could be further noted that the results (particularly on the \textit{Yosemite} subset)
by fixed distance supervisions are even poorer than those by the baseline HyNet in
most cases, indicating that fixed distance supervisions might not be helpful
for improving model performances.

\begin{figure}[t]
  \centering
  \subfloat[DesDis-32]{
    \includegraphics[width=0.4\textwidth]{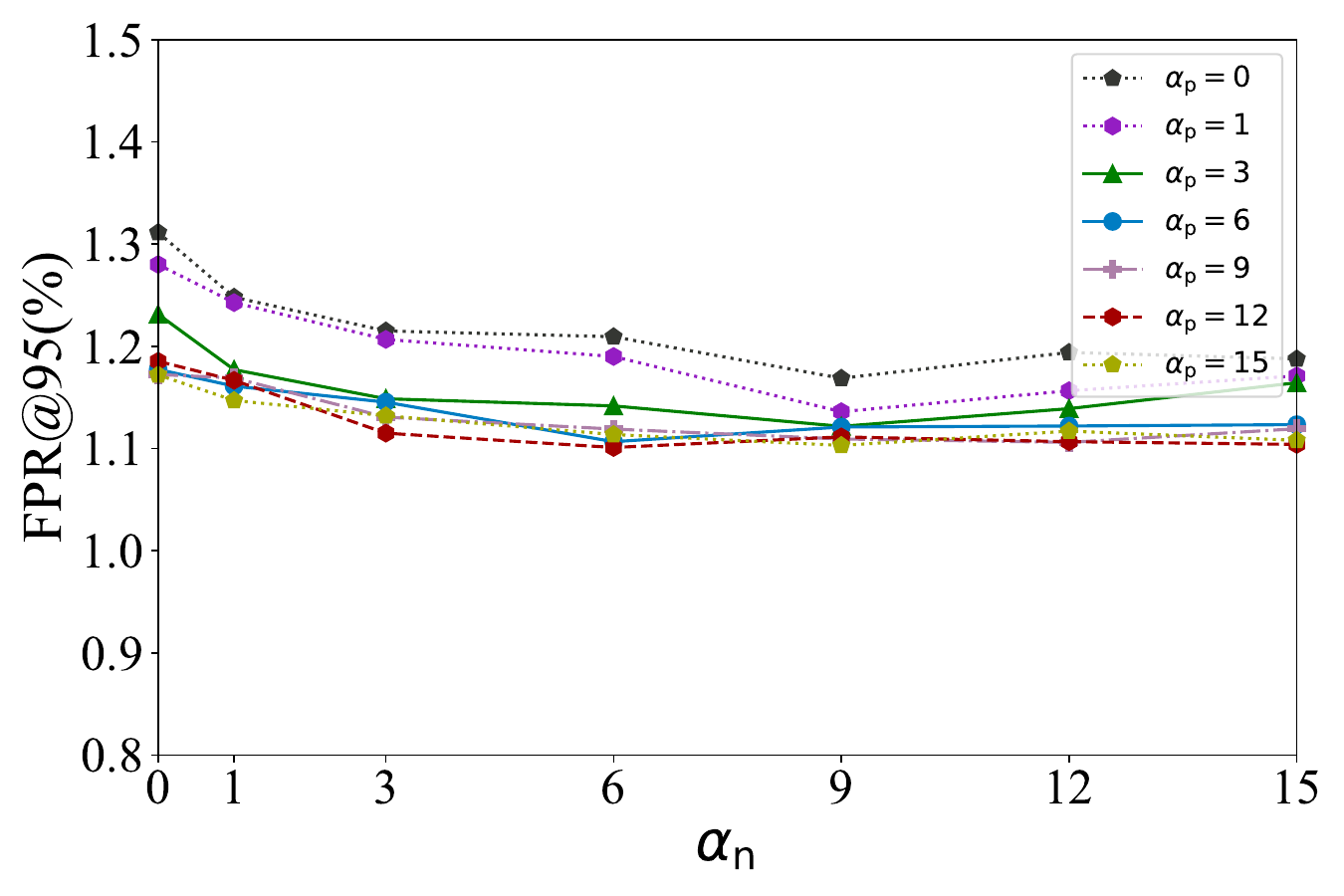}
    \label{fig:pn_light}
  }\\
  \vspace{-0.1in}
  \subfloat[DesDis-HardNet]{
    \includegraphics[width=0.4\textwidth]{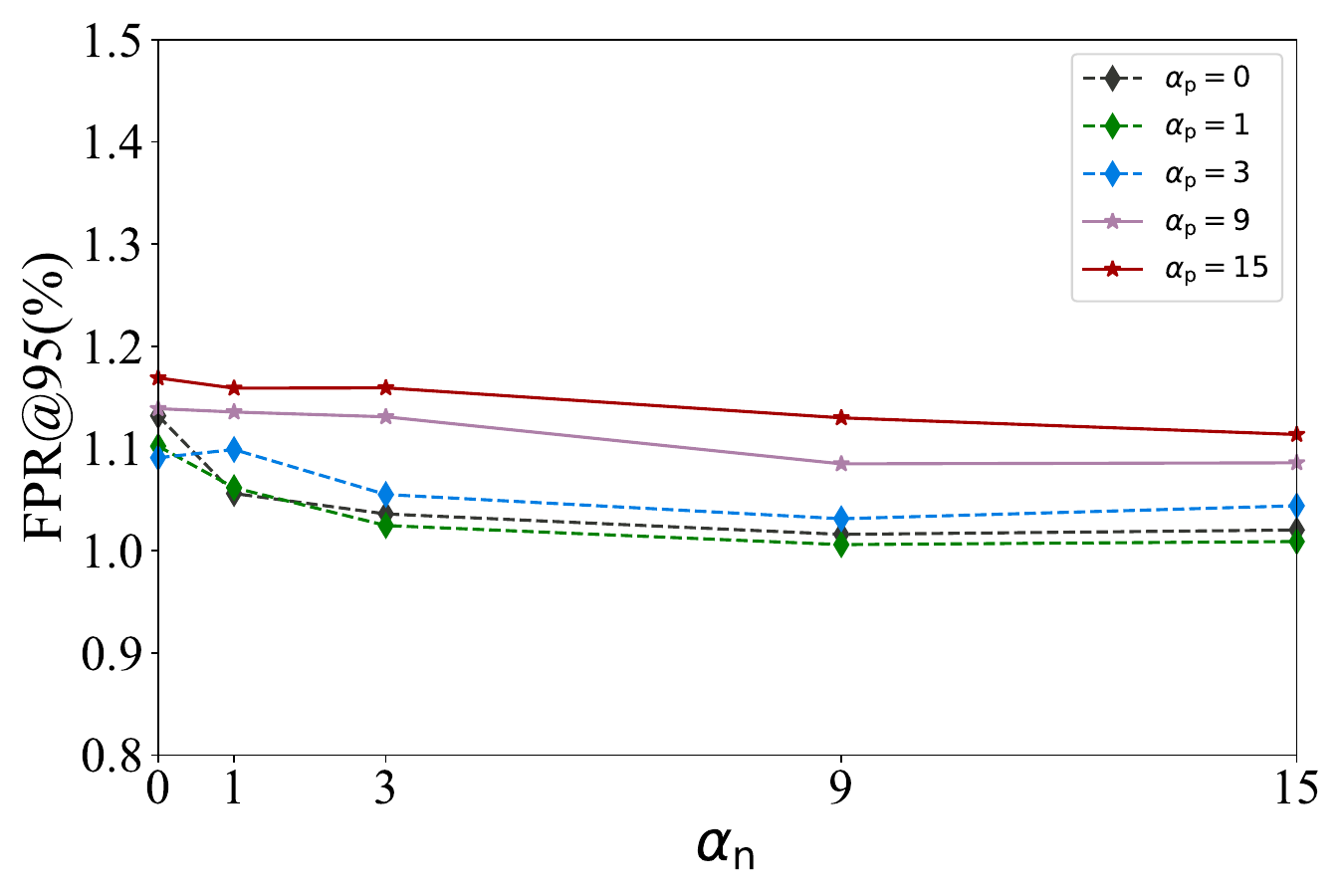}
    \label{fig:pn_equal}
  }
  \caption{
    Impact of $\alpha_\textrm{p}$ and $\alpha_\textrm{n}$ on
    the light-weight student (a) DesDis-32, and
    the equal-weight student (b) DesDis-HardNet.
    The models are trained on the Liberty subset of the Brown dataset,
    and then evaluated on the Yosemite and Notredame subsets.
    The results under each parameter configuration are averaged
    over 5 reruns.
  }
\end{figure}

\subsubsection{Impact of $\alpha_\textrm{p}$ and $\alpha_\textrm{n}$}
In this subsection, we evaluate the effect of the two weights
$\alpha_\textrm{p}$ and $\alpha_\textrm{n}$ in \cref{eq:Lobj}.
It is noted from many visual tasks: when a set of training data is
used to respectively train two networks that have a completely same loss
function (often containing multiple weighted loss terms)
but different architectures (e.g., different numbers of layers),
their performances would be generally different. An appropriate weight
configuration of loss terms for one network might not be quite appropriate for the other.
Hence, we conduct experiments on light-weight and equal-weight
student models respectively.
In the light-weight student case,
we evaluate DesDis-32 under different configurations of
$\alpha_\mathrm{p}=\{0,1,3,6,9,12,15\}$ and $\alpha_\mathrm{n}=\{0,1,3,6,9,12,15\}$.
We train the model on the \textit{Liberty} subset of Brown, and then test it
on the \textit{Yosemite} and \textit{Notredame} subsets.
We evaluate the model under each parameter configuration
five times independently. The corresponding mean results
are shown in \cref{fig:pn_light}.
As seen from this table,
when the two weights vary from 6 to 15, the performance of DesDis-32 varies slowly,
indicating that the proposed TS regularizer is not quite sensitive to the two weights.
So the two parameters are simply set to 9 for the proposed light-weight models in all the experiments.
In the equal-weight student case, we evaluate
DesDis-HardNet under different configurations of
$\alpha_\mathrm{p}=\{0,1,3,9,15\}$ and $\alpha_\mathrm{n}=\{0,1,3,9,15\}$.
We train the model on the \textit{}Liberty subset of Brown, and then test it on
the \textit{Yosemite} and
\textit{Notredame} subsets. We evaluate the model under each parameter configuration
five times independently. The corresponding mean results are shown in \cref{fig:pn_equal}.
As seen from this figure,
when $\alpha_\mathrm{p}=1$ and $\alpha_\mathrm{n}$ varies from 9 to 15,
the model varies slightly and performs better in comparison to the
other parameter configurations. So, $\alpha_\mathrm{p}$ is set to 1 and  $\alpha_\mathrm{n}$
is set to 15 for the equal-weight models in all the experiments.

\begin{table}[t]
  \centering
  \fs
  \caption{Impact of $\mathcal{L}_\mathrm{B}$.
    The models are trained on the
    Liberty subset of Brown, and tested on the Yosemite (YOS)
    and Notredame (ND) subsets.
    The numbers are false positive rate at 95\% recall.
  }
  \label{tab:lB}
  \begin{tabular}{lccc}
    \toprule
    Method                                       & YOS$\downarrow$ & ND$\downarrow$ & Mean$\downarrow$ \\
    \midrule
    HyNet                                        & 0.88            & 0.34           & 0.61             \\
    DesDis-HyNet (w/o $\mathcal{L}_\mathrm{B}$)  & 0.93            & 0.35           & 0.64             \\
    DesDis-HyNet (with $\mathcal{L}_\mathrm{B}$) & \textbf{0.70}   & \textbf{0.29}  & \textbf{0.50}    \\
    \bottomrule
  \end{tabular}
\end{table}

\begin{table*}[t]
  \centering
  \caption{Comparison among the light-weight student models with $\{3,4,5,6\}$ layers
    (denoted as Net-I, Net-II, Net-III, Net-IV, and Net-III is exactly DesDis-32).
    The numbers in the seven columns on the right are the false positive
    rates at 95\% recall on Brown \citep{BrownDataset}.
    The throughputs are tested on a GTX 1650Ti.}
  \label{tab:fpr}
  \newcolumntype{C}[1]{>{\centering\let\newline\\\arraybackslash\hspace{0pt}}m{#1}}
  \def\gridwidth{0.04}
  \fs
  \begin{tabular}{lcccccccccc}
    \toprule
    {Train}     & \multirow{2}{*}{\centering \#Layers} & \multirow{2}{*}{\centering \#Param.} & Throughputs   & ND                      & YOS                    & LIB                     & YOS     & LIB      & ND       & \multirow{2}{0.07\textwidth}{\centering Mean} \\
    \cmidrule(l{6pt}r{6pt}){5-6} \cmidrule(l{6pt}r{6pt}){7-8} \cmidrule(l{6pt}r{6pt}){9-10}
    Test        &                                      &                                      & (K patch/sec) & \multicolumn{2}{c}{LIB} & \multicolumn{2}{c}{ND} & \multicolumn{2}{c}{YOS} &                                                                               \\
    \midrule
    Net-I       & 3                                    & 0.543M                               & 261           & 3.87                    & 4.78                   & 1.09                    & 1.57    & 3.83     & 3.04     & 3.03                                          \\
    Net-II      & 4                                    & 0.355M                               & 195           & 2.07                    & 2.95                   & 0.61                    & 0.94    & 2.16     & 2.10     & 1.81                                          \\
    \bf Net-III & 5                                    & 0.502M                               & 145           & 1.52                    & 2.36                   & 0.54                    & \bf0.81 & \bf 1.68 & 1.48     & 1.39                                          \\
    Net-IV      & 6                                    & 0.539M                               & 102           & \bf 1.35                & \bf 2.11               & \bf 0.46                & 0.82    & 1.76     & \bf 1.33 & \bf 1.31                                      \\
    \bottomrule
  \end{tabular}
\end{table*}


\subsubsection{Impact of $\mathcal{L}_\mathrm{B}$}
In this subsection, we evaluate the impact of the basic loss term
$\mathcal{L}_\mathrm{B}$ in \cref{eq:Lobj}.
Here, we use HyNet \citep{HyNet} as the teacher model, and derive the corresponding
equal-weight student model that is trained only with the TS regularizer
(without $\mathcal{L}_\mathrm{B}$) in \cref{eq:Lobj}.
The \textit{Liberty} subset of the Brown dataset is used for training,
and the \textit{Yosemite} and \textit{Notredame} subsets are used for testing.
The corresponding results are reported in \cref{tab:lB}.
In addition, the corresponding results by the original HyNet (baseline) and our
complete model DesDis-HyNet with $\mathcal{L}_\mathrm{B}$ are also added into
\cref{tab:lB} for comparison.
As seen from this table, the performance of the model trained without
$\mathcal{L}_\mathrm{B}$ is slightly worse than that of the original HyNet,
and also worse than that of the proposed complete model trained with
both $\mathcal{L}_\mathrm{B}$ and the TS regularizer. These results demonstrate
that the joint usage of both $\mathcal{L}_\mathrm{B}$ and the proposed TS
regularizer is much helpful for improving performances.

\subsubsection{Impact of the Number of Layers in the Light-Weight Models}
\label{sec:architecture}
Considering that we have tested four 5-layer student models with different numbers
of channels in \cref{subsec:lightweight}, we evaluate the performances of
the student models with different numbers of layers in this subsection. Similar to
the experimental configuration in \cref{subsec:lightweight}, we adopt HyNet
\citep{HyNet} as the teacher model. Then, we use the Brown dataset to evaluate
four light-weight student models with \{3,4,5,6\} layers respectively. The
four models are denoted as Net-I, Net-II, Net-III, Net-IV, and Net-III is
exactly the DesDis-32 in previous sections.

\begin{figure}[t]
  \begin{center}
    \includegraphics[width=0.48\textwidth]{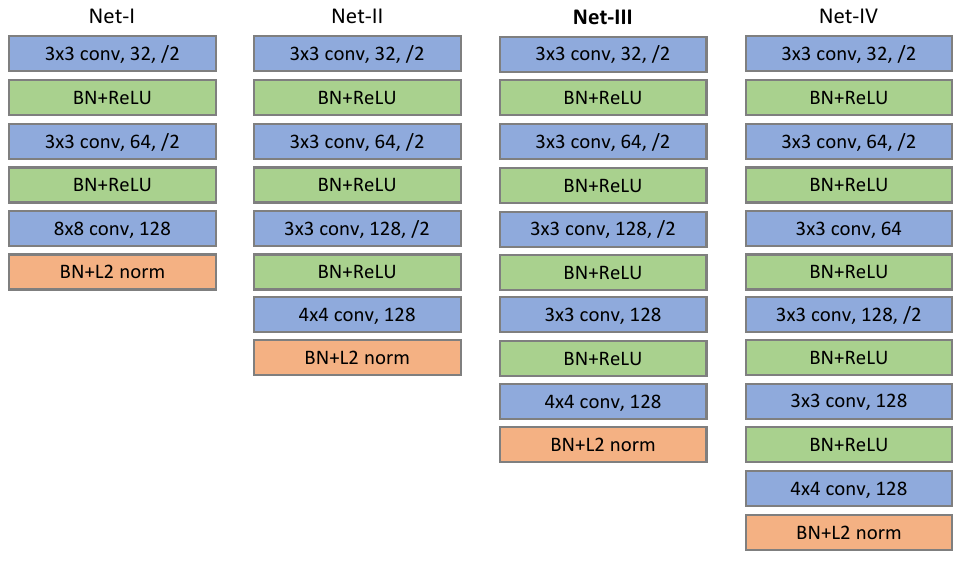}
  \end{center}
  \caption{Architectures of the four models with \{3,4,5,6\} convolutional layers.
    `BN' denotes batch normalization. `/2' denotes convolution operation with a stride of 2.
    Net-III is exactly the DesDis-32 in \cref{sec:experiments}.}
  \label{fig:arc}
\end{figure}

\cref{fig:arc} shows the architectures of the four light-weight models. As
seen from this figure, except the last convolutional layer that encodes the feature
maps into a 128 dimensional descriptor, all the rest convolutional layers in the
four models have a kernel size of 3$\times$3. The channel number in the first
layer is 32, and is doubled whenever the feature map is downsampled through strided
convolution. The third and fourth columns in \cref{tab:fpr}
report the parameter numbers and the throughputs of the four models. It is noted
that among these models, although Net-I has only 3 layers, it has the largest
number of parameters, mainly because it uses a large convolution kernel
8$\times$8 in its last layer. In addition, although Net-I has the largest
number of parameters, its computational cost is the lowest, hence, it has the
largest throughput. In general, with the increase of the layer number, a student
model achieves a relatively smaller throughput.

\begin{table*}[t]
  \caption{
    Evaluation on the loss terms of the DesDis framework by utilizing the
    handcrafted descriptor as teacher.
    $\mathcal{L}_\mathrm{B}$ denotes the basic loss term from HyNet.
    TSR denotes the teacher-student regularizer.
    The numbers denote the false positive rate at 95\% recall.
  }
  \label{tab:SIFT_e2e}
  \def\len{0.25in}
  \centering
  \fs
  \begin{tabular}{lccC{\len}C{\len}C{\len}C{\len}C{\len}C{\len}C{0.3in}}
    \toprule
    Train                                & \multirow{2}{*}{\centering $\mathcal{L}_\mathrm{B}$} & \multirow{2}{*}{TSR} & ND                      & YOS                    & LIB                     & YOS & LIB & ND & \multirow{2}{0.07\textwidth}{\centering Mean} \\
    \cmidrule(l{6pt}r{6pt}){4-5} \cmidrule(l{6pt}r{6pt}){6-7} \cmidrule(l{6pt}r{6pt}){8-9}
    Test                                 &                                                      &                      & \multicolumn{2}{c}{LIB} & \multicolumn{2}{c}{ND} & \multicolumn{2}{c}{YOS} &                                                                \\
    \midrule
    SIFT                                 &                                                      &                      & \multicolumn{2}{c}{29.86} & \multicolumn{2}{c}{22.53} & \multicolumn{2}{c}{27.29} & 26.55 \\
    \#1 DesDis-32 (SIFT as teacher)      &                                                      & $\surd$             & 32.03   & 31.05                  & 19.71 & 20.21 & 25.34 & 25.55 & 25.65                                                                                   \\
    \#2 DesDis-32 (SIFT as teacher)      & $\surd$                                              & $\surd$             & 1.83 & 2.78 & 0.51 & 0.95 & 2.15 & 1.71 & 1.67 \\
    \#3 DesDis-32 (without distillation) & $\surd$                                              &                     & 1.81 & 2.86     & 0.55    & 0.90      & 2.01     & 1.69     & 1.64  \\ 
    DesDis-32 (HyNet as teacher)         & $\surd$                                              & $\surd$             & 1.52                  & 2.36                  & 0.54                  & 0.81  & 1.68 & 1.48 & 1.39                                      \\
    \midrule
    ORB   &          &        & \multicolumn{2}{c}{57.37} & \multicolumn{2}{c}{49.05} & \multicolumn{2}{c}{53.65} & 53.36\\
    Binary DesDis-32 (ORB as teacher) V1  & &$\surd$ & 60.81 & 65.17  & 45.24 & 49.83 & 58.24 & 57.12 & 56.06 \\
    Binary DesDis-32 (ORB as teacher) V2  & &$\surd$ & 58.61 & 58.09  & 48.34 & 45.84 & 53.85 & 52.16 & 52.82 \\
    \bottomrule
  \end{tabular}
\end{table*}

\cref{tab:fpr} also reports the false positive rates at 95\% recall by the
four student models. As noted from this table, with the increase of the layer
number, a student model generally achieves a relatively higher patch verification
performance, but a relatively slower inference speed. Among them, Net-III could
achieve a relatively better balance between computational efficiency and accuracy
($\approx$1.4$\times$ faster than Net-IV without a significant drop in performance).
Thus, we adopt 5-layer-networks in \cref{sec:experiments}.

\subsection{Evaluation of DesDis by Utilizing Handcrafted Descriptor as Teacher}
\label{subsec:handcrafted}

In the previous subsections, we have verified the effectiveness of the proposed
DesDis framework for distilling the knowledge of DNN-based local descriptors.
In this subsection, we further evaluate the effectiveness of DesDis for distilling
the knowledge of the handcrafted descriptor, including 
SIFT \citep{SIFT} and ORB \citep{orb}.

\subsubsection{SIFT Distillation Under DesDis}\label{subsec:sift}
Here, we use the classic SIFT \citep{SIFT} as the teacher and
DesDis-32 as the student. We compare the following variants on the Brown dataset \citep{BrownDataset}:
(1) the model trained with only the proposed TS regularizer, denoted as Model \#1;
(2) the model trained with both the TS regularizer and the basic loss
term $\mathcal{L}_\mathrm{B}$ from HyNet, denoted as Model \#2;
(3) the DesDis-32 that is singly trained with $\mathcal{L}_\mathrm{B}$ (without distillation),
denoted as Model \#3.

The corresponding results are reported at the top of \cref{tab:SIFT_e2e}.
In addition, \cref{tab:SIFT_e2e} also reports the results of SIFT and
the proposed DesDis-32 by utilizing HyNet \citep{HyNet} as teacher for comparison.
The following points could be observed from this table:

(1) Model \#1 that is trained with only the TS regularizer by utilizing SIFT as
teacher could achieve
a slightly better performance than SIFT. This verifies the transferability of
handcrafted descriptor into an end-to-end network to some extent,
however, the improvement brought by the end-to-end network with the TS
regularizer is limited.

(2) Model \#2 that is trained with both the basic loss term $\mathcal{L}_\mathrm{B}$ and the
TS regularizer by utilizing SIFT as the teacher model significantly
outperforms SIFT and Model \#1. However, it is noted
from \cref{tab:SIFT_e2e} that Model \#2 only achieves a close
performance to Model \#3, which is singly trained with $\mathcal{L}_\mathrm{B}$ but does not
use SIFT for distillation. These results indicate that when the handcrafted
descriptor SIFT is used as teacher, the improvement of its student mainly
owes to the basic loss $\mathcal{L}_\mathrm{B}$, rather than the teacher SIFT and
the TS regularizer,
probably because the performance of the teacher SIFT is significantly worse than the
singly trained Model \#3.

(3) The proposed DesDis-32 with both  $\mathcal{L}_\mathrm{B}$ and the TS
regularizer by utilizing HyNet as the
teacher model performs best among all the comparative methods. This result
indicates that when a relatively high-quality model (e.g., HyNet) is
used as teacher,  its student would obtain more benefits from the designed TS regularizer.

\begin{table*}[t]
  \centering
  \caption{
    Comparison of training time (second) on the Liberty subset of Brown.
    The models are trained for 200 epochs on a Tesla V100 GPU.
    The forward propagation when training a teacher only involves
    the forward propagation of itself, while
    the forward propagation when training a student involves
    the forward propagation of itself and its teacher.
  }
  \label{tab:time}
  \fs
  \begin{tabular}{cccccc}
    \toprule
    Model          & Forward Propagation & Loss Computation & Backward Propagation & Total \\
    \midrule
    HardNet        & 1208                & 125              & 3051                 & 4384  \\
    DesDis-HardNet & 2118                & 262              & 3052                 & 5432  \\
    \midrule
    SOSNet         & 1210                & 283              & 3030                 & 4523  \\
    DesDis-SOSNet  & 2118                & 408              & 3042                 & 5568  \\
    \midrule
    HyNet          & 1411                & 153              & 4864                 & 6428  \\
    DesDis-HyNet   & 2453                & 284              & 4865                 & 7602  \\
    \bottomrule
  \end{tabular}
\end{table*}

\subsubsection{ORB Distillation Under DesDis}

In the previous content, we focused on learning and 
distilling the knowledge of  real-valued descriptors.  
In this subsection, we analyze the possibility of 
knowledge distillation of binary descriptors in the proposed 
DesDis framework. Here, we use the classic ORB \citep{orb} descriptor as teacher.

For knowledge distillation of ORB in DesDis, the following issues have to be taken into account: 
(i) the designed Teacher-Student (TS) regularizer in DesDis has to measure the difference 
between the distance of pairwise binary teacher descriptors (i.e., ORB) and  the 
distance of pairwise student descriptors, however, the commonly-used distance metric 
for ORB is the Hamming distance, different from the Euclidean distance metric used 
for the real-valued student descriptors in the current version of DesDis; 
 (ii) it is generally more difficult for a deep network to learn a binary 
 descriptor than a real-valued descriptor in an end-to-end manner from the 
 perspective of network training optimization. 

It is noted that many existing binary descriptor learning methods \citep{L2Net,DOAP} 
employ a two-stage strategy: 
they firstly train a deep network for learning real-valued descriptors, and 
then binarize the real-valued descriptors as a post-processing operation.  
Considering that such a two-stage strategy could avoid the aforementioned second 
issue, we attempt to use this strategy for  knowledge distillation of ORB here. 
Accordingly, in order to handle the aforementioned first issue, we use the 
following two variants of  the TS regularizer respectively for network training: 
(V1) we scale the Hamming distance of pairwise teacher ORB descriptors into 
the range [0, 2] where the Euclidean distances of pairwise normalized student 
descriptors vary, and then measure the difference between the student and teacher 
distances; (V2) we calculate the Euclidean distances (also scaled into the range [0, 2]) 
of pairwise ORB descriptors instead of the Hamming distance, and then measure the 
difference between the student and teacher distances. The corresponding results on 
the Brown dataset \citep{BrownDataset} by utilizing DesDis-32 as the student model are reported in 
\cref{tab:SIFT_e2e} above. As seen from this table, DesDis-32 that is trained with 
the V1 variant performs slightly worse than ORB, 
while DesDis-32 that is trained with the V2 variant performs slightly better than ORB. 
These results demonstrate the possibility of knowledge distillation of binary descriptors to some extent. 

\subsection{Comparison of Training Time of the Teacher and Student}\label{subsec:time}

In this subsection, we evaluate the training time costs of the teacher
and student models. Specifically, we compare the training times of the
three teacher models HardNet \citep{HardNet}, SOSNet \citep{SOSNet} and HyNet \citep{HyNet}, with
their corresponding student models DesDis-HardNet, DesDis-SOSNet and DesDis-HyNet respectively.
All the models are trained on the \textit{Liberty} subset of the Brown dataset
for 200 epochs on a Tesla V100 GPU.
The corresponding training time costs of these models are reported in \cref{tab:time}.

As seen from the table, the time cost for training a model consists
of three parts:
the forward propagation time, the loss computation time, and the back propagation time.
And three points could be observed:

(1) Each proposed distillation method takes nearly twice forward propagation times as long as
its corresponding baseline method,
because the forward propagations of both the student and teacher models have to
be implemented when training the proposed distillation method.

(2) The loss computation time of each proposed method is larger than that of its
corresponding baseline method, because the additional TS regularizer
has to be computed in the proposed method.
However, the training times of both the proposed method and its baseline
for loss computation are much smaller than those for forward and backward propagations.

(3) The training times of each proposed method and its baseline for back propagation
are close, because the student model in the proposed equal-weight method
(where the teacher model is always frozen) has the same architecture as its baseline.
Additionally, the back propagation times of each
proposed method and its baseline are much larger than their forward
propagation times, mainly because of the gradient computations involved during
the back propagation process.

In sum, training an equal-weight student model takes around 1.2 times
as long as training a teacher model.


\section{Conclusion}\label{sec:conclusion}
In this paper, we focus on learning a fast and discriminative local descriptor.
Many existing works
in literature employed a triplet loss or its variants that are expected to
enforce a small distance between positive pairs and a large distance between
negative pairs. However, such an expectation has to be lowered due to the
non-perfect convergence of the networks. Addressing this issue, we propose a
descriptor distillation framework, named DesDis, for local descriptor learning,
where a student model gains knowledge from its teacher. In addition, we
prove in theory that the student model could outperform its teacher through
the designed teacher-student regularizer.
Under the proposed DesDis framework, both equal-weight and light-weight student
models could be obtained. Extensive experimental results on 3 public datasets
demonstrate that the proposed DesDis framework could not only boost the
performances of the existing descriptor learning networks by utilizing them
as the teacher models, but also pursue much lighter student models that
achieve a balance between computational effectiveness and efficiency.

\noindent\textbf{Acknowledgment }
This work is funded by the National Natural Science 
Foundation of China (Grant Nos. 61991423, 62376269) and the Strategic Priority 
Research Program of the Chinese Academy of Sciences (Grant No. XDA27040811).

\noindent\textbf{Data Availability Statement }
The public datasets used in this paper are:
(a) the Brown dataset \citep{BrownDataset},
(b) the HPatches dataset \citep{HPatches}, and
(c) the ETH SfM dataset \citep{ETH}.
(a) is available at \url{http://matthewalunbrown.com/patchdata/patchdata.html},
(b) is available at \url{https://github.com/hpatches/hpatches-dataset}, and
(c) is available at \url{http://www.cvg.ethz.ch/research/local-feature-evaluation/}

\backmatter

\bibliographystyle{apalike}
\bibliography{ref}


\end{document}